\def\QED{\ensuremath{{\square}}}
\def\markatright#1{\leavevmode\unskip\nobreak\quad\hspace*{\fill}{#1}}
\newenvironment{proof}
 {\begin{trivlist}\item[\hskip\labelsep{\bf Proof.}]}
 {\markatright{\QED}\end{trivlist}}
\def\tildeD{{\tilde{D}}}
\def\calD{\mathcal{D}}
\def\calC{\mathcal{C}}
\def\inner#1#2{{\langle #1,#2\rangle}}
\def\dx{\mathrm{d}x}
\def\dmu{\mathrm{d}\mu}
\def\dP{\mathrm{d}P}
\def\KL{\mathrm{KL}}
\def\kl{\mathrm{kl}}
\def\Var{\mathrm{Var}}
\def\lse{\mathrm{lse}}
\def\diag{\mathrm{diag}}
\def\dom{\mathrm{dom}}
\def\Num{\mathrm{Num}}
\def\Den{\mathrm{Den}}
\def\defeq{{=:}}
\def\ekl{\mathrm{ekl}}
\def\Var{\mathrm{Var}}
\def\eqdef{:=}
\def\eqnota{:=:}
\def\tildeF{\tilde{F}}
\def\tildeG{\tilde{G}}
\def\hatG{\hat{G}}
\def\bbR{\mathbb{R}}
\def\bbN{\mathbb{N}}
\def\calS{\mathcal{S}}
\def\calE{\mathcal{E}}
\def\calX{\mathcal{X}}
\def\calM{\mathcal{M}}
\def\st{{\ :\ }}
\def\YES{$\checkmark$} 
\def\NO{$\times$}
\def\eKL{\mathrm{eKL}}
\newtheorem{lemma}{Lemma}
\newtheorem{theorem}{Theorem}
\newtheorem{definition}{Definition}
\newtheorem{remark}{Remark}
\newtheorem{property}{Property}
\newtheorem{fact}{Fact}
\newtheorem{assumption}{Assumption}
\title{Monte Carlo Information Geometry: The dually flat case}
\author{
Frank Nielsen$^1$\footnote{E-mail: {\tt Frank.Nielsen@acm.org}} \and 
Ga\"etan Hadjeres$^2$\footnote{E-mail: {\tt gaetan.hadjeres@sony.com}}
}
\date{$^1$ Sony Computer Science Laboratories Inc, Tokyo, Japan\\
$^2$ Sony Computer Science Laboratories, Paris, France}
\begin{document}
\maketitle
 
\begin{abstract}
Exponential families and mixture families are parametric probability models that can be geometrically studied as smooth statistical manifolds with respect to any statistical divergence like the Kullback-Leibler (KL) divergence or the Hellinger divergence.
When equipping a statistical manifold with the KL divergence, the induced manifold structure is  dually flat, and the KL divergence between distributions amounts to an equivalent Bregman divergence on their corresponding parameters.
In practice, the corresponding Bregman generators of
mixture/exponential families require to perform definite integral
calculus that can either be too time-consuming (for exponentially
large discrete support case) or even do not admit closed-form formula
(for continuous support case).
In these cases, the dually flat construction remains theoretical and cannot be used by information-geometric algorithms. 
To bypass this problem, we consider performing stochastic Monte Carlo
(MC) estimation of those integral-based mixture/exponential family
Bregman generators.
We show that, under natural assumptions, these MC
  generators are almost surely Bregman generators.
We define a series of dually flat information geometries, termed Monte
Carlo Information Geometries, that increasingly-finely approximate the
untractable geometry.
The advantage of this MCIG is that it allows a practical use of the
Bregman algorithmic toolbox on a wide range of probability
distribution families.
We demonstrate our approach with a clustering task on a  mixture family manifold.
\end{abstract}
 
\sloppy

\noindent {\bf Keywords}: Computational Information Geometry, Statistical Manifold, Dually flat information geometry, Bregman generator, Stochastic Monte Carlo Integration, Mixture family, Exponential Family, Clustering.

\section{Introduction}

We concisely describe the construction and properties of dually flat spaces~\cite{IG-2014,IG-2016} in~\S\ref{sec:dfsbg}, 
define the statistical manifolds of exponential families and mixture families in~\S\ref{sec:geosf},  and discuss about the computational tractability of Bregman algorithms in dually flat spaces in~\S\ref{sec:comptract}.

\subsection{Dually flat space: Bregman geometry}\label{sec:dfsbg}
A smooth (potentially asymmetric) distance $D(\cdot,\cdot)$ is called a {\em divergence} in information geometry~\cite{IG-2014,IG-2016}, and induces a differential-geometric dualistic structure~\cite{Eguchi-1992,IGDiv-2010,IG-2014,IG-2016}. 
In particular, a strictly convex and twice continuously differentiable $D$-dimensional real-valued function $F$, termed a {\em Bregman generator}, induces 
a dually connection-flat structure via a corresponding  Bregman Divergence (BD)~\cite{Bregman-2005} $B_F(\cdot,\cdot)$ given by:  
\begin{equation}
B_F(\theta_1:\theta_2) \eqdef F(\theta_1)-F(\theta_2)-\inner{\theta_1-\theta_2}{\nabla F(\theta_2)},
\end{equation}
where $\inner{y}{x} \eqdef y^\top x$ denotes the inner product, and $\nabla F(\theta)\eqdef (\partial_i F(\theta))_i$ denotes the gradient vector of partial first-order derivatives. We use the standard notational convention of information geometry~\cite{IG-2014,IG-2016}: 
$\partial_i \eqnota \frac{\partial}{\partial \theta^i}$ to indicate a contravariant vector~\cite{tensor-2011} $\theta=(\theta^i)_i$.
(The $\eqnota$ symbol means it is a notational convention equality, like $\sum_{i=1}^k x_i \eqnota x_1+\ldots x_k$. 
It differs from $a\eqdef b$ that denotes the symbol by of a quantity equality by definition.)

The Legendre-Fenchel transformation~\cite{LegendreIG-2010} :
\begin{equation}
F^*(\eta)= \sup_\theta \{ \inner{\theta}{\eta}-F(\theta) \},
\end{equation} 
is at the heart of the duality of  flat structures by defining two global affine coordinate systems: The {\em primal affine $\theta$-coordinate system} and the {\em dual affine $\eta$-coordinate system}, so that any point $P$ of the manifold $\calM$ can  either be accessed by its {\em primal} $\theta(P)$ coordinates or equivalently by its {\em dual} $\eta(P)$ coordinates. 
We can convert between  these two dual coordinates as follows:
\begin{eqnarray}
 \eta&=&\eta(\theta)=\nabla F(\theta)=(\partial_i F(\theta))_i,\\
  \theta&=&\theta(\eta)=\nabla F^*(\eta)=(\partial^i F^*(\eta))_i,
	\end{eqnarray}
	with reciprocal gradients $\nabla F^* \eqdef (\nabla F)^{-1}$. 
	We used the notational convention $\partial^i \eqnota \frac{\partial}{\partial \eta_i}$ that indicates the covariant vector~\cite{tensor-2011} $\eta=(\eta_i)_i$.

The metric tensor $g$ of the dually flat structure $(\calM,F)$  can either be expressed using the $\theta$- or $\eta$-coordinates using the Hessians of the potential functions~\cite{HIG-2007}:
\begin{eqnarray}
G(\theta)&=&\nabla^2 F(\theta),\\
G^*(\eta)&=&\nabla^2 F^* (\eta),
\end{eqnarray}
and defines a smooth bilinear form ${\inner{v}{v'}}_g$ on $\calM$ so that for two vectors $v,w$ of a tangent plane $T_P$, we have:
\begin{eqnarray}
{\inner{v}{v'}}_g &=& \theta(v)^\top G(\theta) \theta(w),\\
 &=& \eta(v)^\top G^*(\eta) \eta(w),
\end{eqnarray}
where $\theta(v)=(v^i)_i$ and $\eta(v)=(v_i)_i$ denote the contravariant coefficients and covariant coefficients of a vector $v$, respectively.
That is,  any vector $v\in T_P$ can be written either as $v=\sum_i v^ie_i$ or as $\sum_i v_i{e^*}^i$, where  $\{e_i\}_i$ and
 $\{{e^*}^i\}_i$ is a dual basis~\cite{tensor-2011} of the vector space structure of $T_P$.

Matrices $G(\theta)$ and $G^*(\eta)$ are symmetric positive definite (SPD, denoted by $G(\theta)\succ 0$ and $G^*(\eta)\succ 0$), and 
 they satisfy the Crouzeix identity~\cite{Crouzeix-1977}: 
\begin{equation}
G(\theta)   G^*(\eta) = I,
\end{equation}
where $I$ stands for the $D\times D$ identity matrix.
This indicates that at each tangent plane $T_P$, the dual coordinate systems are biorthogonal~\cite{refduality-2015} (with $\{e_i\}_i$ and
 $\{{e^*}^i\}_i$ forming a dual basis~\cite{tensor-2011} of the vector space structure of $T_P$):
	\begin{equation}
	\inner{e_i}{{e^*}^j}=\delta_i^j,
	\end{equation}
 with $\delta_i^j$ the Kr\"onecker symbol: $\delta_i^j=1$ if and only if (iff) $i=j$, and $0$ otherwise.
We have:
	\begin{eqnarray}
	\frac{\partial \eta_i}{\partial\theta^j}&=&g_{ij}(\theta)=\inner{e_i}{e_j},\\
\frac{\partial \theta^i}{\partial\eta_j}&=&g^{ij}(\eta)=\inner{{e^*}^i}{{e^*}^j}.\\
\end{eqnarray}

	The convex conjugate functions $F(\theta)$ and $F^*(\eta)$ are called {\em dual potential functions}, and define the global metric~\cite{HIG-2007}.

Table~\ref{tab:dfs} summarizes the differential-geometric structures of dually flat spaces.
Since Bregman divergences are {\em canonical divergences} of dually flat spaces~\cite{IG-2016}, the geometry of dually flat spaces is also  referred to the {\em Bregman geometry}~\cite{Dawid-2007} in the literature.

\begin{definition}[Bregman generator] \label{def:bg}
A Bregman generator is a strictly convex and twice continuously differentiable real-valued function $F:\bbR^D \rightarrow \bbR$.
\end{definition}

Let us cite the following well-known properties~\cite{Bregman-2005} of Bregman generators:

\begin{property}[Bregman generators are equivalent up to modulo affine terms] \label{prop:bdaffine}
The Bregman generator $F_2(\theta)=F_1(\theta)+\inner{a}{\theta}+b$ (with $a\in\bbR^D$ and $b\in\bbR$) yields the same Bregman divergence as the Bregman divergence induced by $F_1$, $B_{F_2}(\theta_1:\theta_2) =B_{F_1}(\theta_1:\theta_2) $, and therefore the same dually flat space $(\calM,F_2)\cong (\calM,F_1)$.
\end{property}

\begin{property}[Linearity rule of Bregman generators] \label{prop:bdcomprule}
Let $F_1, F_2$ be two Bregman generators and $\lambda_1, \lambda_2>0$. 
Then $B_{\lambda_1 F_1+\lambda_2 F_2}(\theta:\theta')= \lambda_1B_{F_1}(\theta:\theta')+\lambda_2 B_{F_2}(\theta:\theta')$.
\end{property}

\begin{table}
\centering
{\small 
\begin{tabular}{|l||l|l|}\hline
Manifold $(\calM,F)$ & Primal structure & Dual structure\\ \hline\hline
Affine coordinate system & $\theta(\cdot)$ & $\eta(\cdot)$\\ 
Conversion $\theta\leftrightarrow\eta$ & $\theta(\eta)=\nabla F^*(\eta)$ & $\eta(\theta)=\nabla F(\theta)$\\ \hline 
Potential function &  $F(\theta)=\inner{\theta}{\nabla F(\theta)} - F^*(\nabla F(\theta)) $ & $F^*(\eta)=\inner{\eta}{\nabla F^*(\eta)}-F(\nabla F^*(\eta))$\\ \hline
Metric tensor $g$ & $G(\theta)=\nabla^2 F(\theta)$ & $G^*(\eta)=\nabla^2 F^*(\eta)$\\ \hline
& $g_{ij}=\partial_i\partial_j F(\theta)$ & $g^{ij}=\partial^i\partial^j F^* (\eta)$\\
Geodesic ($\lambda\in [0,1]$) & $\gamma(P,Q)=\{(PQ)_\lambda=(1-\lambda)\theta(P)+\lambda\theta(Q)\}_\lambda$ & $\gamma^*(P,Q)=\{(PQ)_\lambda^*=(1-\lambda)\eta(P)+\lambda\eta(Q) \}_\lambda$\\ \hline
\end{tabular}
}

\caption{Overview of the dually differential-geometric structure $(\calM,F)$ induced by a Bregman generator $F$.  Notice that if $F$ and $\nabla F^*$ are available in closed-form then so are $\nabla F$ and $F^*$. \label{tab:dfs}}

\end{table}

In practice, the algorithmic toolbox in dually flat spaces (e.g., clustering~\cite{Bregman-2005}, minimum enclosing balls~\cite{SEB-2008}, hypothesis testing~\cite{HT-2013} and Chernoff information~\cite{Chernoff-2013}, Voronoi diagrams~\cite{BVD-2007,BVD-2010}, proximity data-structures~\cite{BVT-2009,BBT-2009}, etc.) can be used whenever the dual Legendre convex conjugates $F$ and $F^*$ are both available in closed-form (see type~1 of Table~\ref{tab:level}). In that case, both the primal $\gamma(P,Q)\eqdef \{(PQ)_\lambda\}_\lambda$ and dual $\gamma^*(P,Q)\eqdef \{(PQ)_\lambda^*\}_\lambda$  geodesics are available in closed form.
These dual geodesics can either be  expressed using the $\theta$ or $\eta$-coordinate systems as follows:

\begin{equation}
(PQ)_\lambda=\left\{
\begin{array}{l}
\theta((PQ)_\lambda) = \theta(P)+\lambda (\theta(Q)-\theta(P)),\\
\eta((PQ)_\lambda) = \nabla F(\theta((PQ)_\lambda))  = \nabla F(\nabla F^*(\eta(P))+\lambda (\nabla F^*(\eta(Q))-\nabla F^*(\eta(P)))),
\end{array}
\right.
 \end{equation}

\begin{equation}
(PQ)_\lambda^*=\left\{
\begin{array}{l}
\eta((PQ)_\lambda^*)  = \eta(P)+\lambda (\eta(Q)-\eta(P)),\\
\theta((PQ)_\lambda^*) = \nabla F^*(\eta((PQ)_\lambda^*)) =  \nabla F^*(\nabla F(\theta(P))+\lambda (\nabla F(\theta(Q))-\nabla F(\theta(P))))
\end{array}
\right.
\end{equation}
That is, the primal geodesic corresponds to a straight line in the primal coordinate system while the dual geodesic is a straight line in the dual coordinate system.
However, in many interesting cases, the convex generator $F$ or its dual $F^*$ (or both) are not available in closed form or are computationally intractable, and the above Bregman toolbox cannot be used.
Table~\ref{tab:bregalg} summarizes the closed-form formulas required to execute some fundamental clustering algorithms~\cite{Bregman-2005,Bregman1DClustering-2014,Bregman1DClustering-2017} in a Bregman geometry. 

\begin{table}
\centering
\begin{tabular}{|l||l|l|l|l|}\hline
Algorithm & $F(\theta)$ & $\eta(\theta)=\nabla F(\theta)$ & $\theta(\eta)=\nabla F^*(\eta)$ & $F^*(\eta)$\\ \hline\hline
Right-sided Bregman clustering & \YES & \YES & \NO & \NO  \\
Left-sided Bregman clustering &  \NO & \NO & \YES & \YES\\
Symmetrized Bregman centroid & \YES & \YES & \YES & \YES\\
Mixed Bregman clustering &   \YES & \YES & \YES & \YES\\\hline
Maximum Likelihood Estimator for EFs &   \NO & \NO & \YES & \NO\\\hline
Bregman soft clustering ($\equiv$ EM) &  \NO & \YES & \YES & \YES \\ \hline
\end{tabular}

\caption{Some fundamental Bregman clustering algorithms~\cite{Bregman-2005,Bregman1DClustering-2014,Bregman1DClustering-2017} (of the Bregman algorithmic toolbox) that illustrate which closed-form are required to be run in practice.\label{tab:bregalg}}

\end{table}

Let us notice that so far the points $P\in\calM$ in the dually flat manifold have no particular meaning, and that the dually flat space structure is generic, not necessarily related to a statistical flat manifold.
We shall now review quickly the dualistic structure of statistical manifolds~\cite{Lauritzen-1987}.

\subsection{Geometry of statistical manifolds}\label{sec:geosf}
Let $I_1(x;y)$ denote a {\em scalar divergence}.
A {\em statistical divergence}  between two probability distributions $P$ and $Q$, with Radon-Nikodym derivatives $p(x)$ and $q(x)$ with respect to (wrt)  a base measure $\mu$ defined on the support $\calX$, is defined as:
\begin{equation}
I(P:Q)=\int_{x\in\calX} I_1\left(p(x):q(x)\right) \dmu(x).
\end{equation}
A  statistical divergence is a measure of dissimilarity/discrimination that satisfies $I(P:Q)\geq 0$ with equality iff. $P=Q$ (a.e., reflexivity property) . 
For example, the Kullback-Leibler divergence is a statistical divergence:
\begin{equation}
\KL(P:Q)\eqdef \int_{x\in\calX} \kl(p(x):q(x)) \dmu(x),
\end{equation}
with corresponding scalar divergence:
\begin{equation}
\kl(x:y) \eqdef x\log\frac{x}{y}.
\end{equation}

The KL divergence between $P$ and $Q$ is also called the {\em relative entropy}~\cite{Cover-2012} because it is the difference of the 
{\em cross-entropy} $h^\times(P:Q)$ between $P$ and $Q$ with the {Shannon entropy} $h(P)$ of $P$:
 
\begin{eqnarray}
\KL(P:Q) &=& h^\times(P:Q)-h(P),\\
h^\times(P:Q) &\eqdef &\int_{x\in\calX} p(x)\log \frac{1}{q(x)} \dmu(x),\\
h(P) &\eqdef& \int_{x\in\calX} p(x)\log \frac{1}{p(x)} \dmu(x)= h^\times(P:P).
\end{eqnarray}

Thus we distinguish a statistical divergence from a parameter divergence by stating that   a statistical divergence is a separable divergence that is the definite integral on the support of a scalar divergence.

In information geometry~\cite{IG-2014,IG-2016}, we	equip a probability manifold $\calM=\{p(x;\theta)\st \theta\in\Theta\}$ with a {\em metric tensor} $g$ (for measuring angles between vectors and lengths of vectors in tangent planes) and a {\em pair of dual torsion-free connections} $\nabla$ and $\nabla^*$ (for defining parallel transports and geodesics) that are defined by their Christoffel symbols $\Gamma_{ijk}$ and  $\Gamma_{ijk}^*$. 
These geometric structures $(\calM,D)\eqdef (\calM,g_D,\nabla_D,\nabla^*_D)$ can be induced by {\em any smooth} $C^\infty$ divergence $D(\cdot:\cdot)$~\cite{Eguchi-1992,IGDiv-2010,IG-2014,IG-2016} as follows: 
\begin{eqnarray}
	g_{ij}(x)&=& \left. \frac{\partial^2}{\partial x_i\partial x_j} D(x:y)\right\vert_{y=x},\\
	\Gamma_{ijk}(x)&=& \left. -\frac{\partial^3}{\partial x_i\partial x_j\partial y_k} D(x:y)\right\vert_{y=x}.
\end{eqnarray}
The {\em dual  divergence} $D^*(p:q)\eqdef D(q:p)$ highlights the {\em reference duality}~\cite{refduality-2015}, and
the dual connection $\nabla^*$  is induced by the dual divergence $D^*(\cdot:\cdot)$ ($\nabla^*$ is defined by $\Gamma_{ijk}^*(x)=\left. -\frac{\partial^3}{\partial x_i\partial x_j\partial y_k} D^*(x:y)\right\vert_{y=x}$). 
Observe that the metric tensor is self-dual: $g^*=g$.


	\begin{table}%
\centering
\begin{tabular}{|l|c|r|}\hline
 & Exponential Family & Mixture Family\\ \hline
Density & $p(x;\theta)=\exp(\inner{\theta}{x}-F(\theta))$ & $m(x;\eta)=\sum_{i=1}^{k-1}\eta_i f_i(x)+c(x)$\\
&  & $f_i(x)=p_i(x)-p_0(x)$\\
Family/Manifold & $\calM=\{p(x;\theta)\ :\ \theta\in\Theta^\circ \}$ &  $\calM=\{m(x;\eta)\ :\ \eta\in H^\circ \}$ \\
Convex function ($\equiv ax+b$) & $F$: cumulant & $F^*$: negative entropy\\ \hline
Dual coordinates & moment $\eta=E[t(x)]$ &  $\theta^i=h^\times(p_0:m)-h^\times(p_i:m)$ \\ \hline
Fisher Information $g=(g_{ij})_{ij}$ & $g_{ij}(\theta)=\partial_i\partial_j F(\theta)$ & $g_{ij}(\eta)=\int_\calX \frac{f_i(x)f_j(x)}{m(x;\eta)}\dmu(x)$ \\
& $g=\Var[t(X)]$ & \\  \hline
& & $g_{ij}(\eta)=-\partial_i\partial_j h(\eta)$ \\
Christoffel symbol & $\Gamma_{ij,k}=\frac{1}{2}\partial_i\partial_j\partial_k F(\theta)$ &
$\Gamma_{ij,k}=-\frac{1}{2}\int_\calX \frac{f_i(x)f_j(x)f_k(x)}{m^2(x;\eta)} \dmu(x)$ \\ \hline
Entropy & $-F^*(\eta)$ & $-F^*(\eta)$\\
\hline
Kullback-Leibler divergence & $B_F(\theta_2:\theta_1)$ & $B_{F^*}(\eta_1:\eta_2)$\\
& $=B_{F^*}(\eta_1:\eta_2)$ & $=B_F(\theta_2:\theta_1)$\\ \hline
\end{tabular}
\caption{Characteristics of the dually flat geometries of Exponential Families (EFs) and Mixture Families (MFs).}
\label{tab:comparison}
\end{table}

Let us give some examples of parametric probability families and their statistical manifolds induced by the Kullback-Leibler divergence.

\subsubsection{Exponential family manifold (EFM)} 
	
	We start by a definition:
	
	\begin{definition}[Exponential family]\label{def:ef}
	Let $\mu$ be a prescribed  base measure and $t(x)$ a sufficient statistic vector.
	We can build  a corresponding exponential family:
	\begin{equation}
	\calE_{t,\mu}\eqdef\{p(x;\theta) \propto \exp(\inner{t(x)}{\theta})\}_\theta,
	\end{equation}
	where $p(x;\theta) \eqdef \frac{\dP(\theta)}{\dmu}(x)$.
	
	The densities are normalized by the cumulant function $F$:
	\begin{equation}\label{eq:cf}
	F(\theta)  \eqdef \log \left(\int_{x\in\calX} \exp(\inner{t(x)}{\theta})\dmu(x)\right),
	\end{equation}
	so that:
	\begin{equation}
	p(x;\theta) = \exp(\inner{t(x)}{\theta}-F(\theta)).
	\end{equation}
	Function $F$  is a Bregman generator on the natural parameter space:
	\begin{equation}
	\Theta\eqdef\left\{\theta : \int_{x\in\calX} \exp(\inner{t(x)}{\theta})\dmu(x)<\infty \right\}.
	\end{equation}
	If we add an extra carrier term $k(x)$ and consider the measure $\nu(x)\eqdef \frac{\mu(x)}{\exp(k(x))}$, we get the generic form of an exponential family~\cite{flashcards-2009}:
	\begin{equation}
	\calE_{t,k,\nu}\eqdef\left\{p(x;\theta) \propto \exp(\inner{t(x)}{\theta}+k(x)) \st \theta\in\Theta\right\}.
	\end{equation}
	\end{definition}
	
	We call function $F$ the {\em Exponential Family Bregman Generator}, or EFBG for short in the remainder.

	It turns out that $(\calE_{t,\mu},\KL,\nabla_\KL,\nabla_\KL^*)\cong (\calM,F)$ (meaning the information-geometric structure of the statistical manifold is isomorphic to the information-geometry of a dually flat manifold) so that:
	\begin{eqnarray}
	\KL(p(x;\theta_1):p(x;\theta_2)&=&B_F(\theta_2:\theta_1),\label{eq:KLBDEF}\\
	&=& B_{F^*}(\eta_1:\eta_2),\label{eq:KLBDEFD}
	\end{eqnarray}
	with $\eta=E_{p(x;\theta)}[t(x)]$ the dual parameter called the expectation parameter or moment parameter.

\subsubsection{Mixture family manifold (MFM)}
 Another important family of probability distributions are the mixture families:
	
	\begin{definition}[Mixture family]\label{def:mf}
	Given a set of $k$ prescribed statistical distributions $p_0(x),\ldots, p_{k-1}(x)$, all sharing the same support $\calX$ (say, $\bbR$), a
 {\em mixture family} $\calM$ of order $D=k-1$ consists of all {\em strictly convex combinations} of these component distributions~\cite{wmixture-2017,geowmixtures-2018}:
\begin{equation}
\calM \eqdef  \left\{ m(x;\eta)= \sum_{i=1}^{k-1} \eta_ip_i(x)+ (1-\sum_{i=1}^{k-1} \eta_i)p_0(x) \st \eta_i>0, \sum_{i=1}^{k-1} \eta_i<1 \right\}.
\end{equation}
\end{definition}
It shall be understood from the context that $\calM$ is a shorthand for $\calM_{p_0(x),\ldots, p_{D}}$.

	It turns out that $(\calM,\KL,\nabla_\KL,\nabla_\KL^*)\cong (\calM,G)$ so that:
	\begin{equation}\label{eq:KLBDMF}
\KL(m(x;\eta):m(x;\eta')) = B_G(\eta:\eta'),
\end{equation}
for the Bregman generator being the Shannon negative entropy (also called Shannon information):
\begin{equation}\label{eq:negent}
G(\eta)=-h(m(x;\eta))=\int_{x\in\calX} m(x;\eta)\log m(x;\eta) \dmu(x).
\end{equation}
	We call function $G$ the {\em Mixture Family Bregman Generator}, or MFBG for short in the remainder.
	
	For a mixture family, we prefer to use the notation $\eta$ instead of  $\theta$ for indexing the distribution parameters as it is customary in  textbooks of information geometry~\cite{IG-2014,IG-2016}.
One reason comes from the fact that the KL divergence between two mixtures amounts to a BD on their respective parameters (Eq.~\ref{eq:KLBDMF}) while the KL divergence between exponential family distributions is equivalent to a BD on the swapped order of their respective parameters (Eq.~\ref{eq:KLBDEF}). Thus in order to get the same order of arguments for the KL between  two  exponential family distributions, we need to use the dual Bregman divergence on the dual $\eta$ parameter, see Eq.~\ref{eq:KLBDEFD}.

\subsubsection{Cauchy family manifold (CFM)}

This example is merely given just to emphasize that probability families may neither be exponential nor mixture families.
	
	A Cauchy distribution has probability density  defined on the support $\calX=\bbR$ by:
	\begin{equation} 
p(x;\mu,\sigma) = \frac{1}{\pi \sigma \left(1+\left(\frac{(x-\mu}{\sigma}\right)^2\right)} .
\end{equation}
	The space of all Cauchy distributions:
	\begin{equation}
	\calC=\{p(x;\mu,\sigma) \st \mu\in\bbR, \sigma>0\}.
		\end{equation}
	is a location-scale family~\cite{KassVos-1997}. It is not an exponential family nor a mixture family.
	

	Table~\ref{tab:comparison} compares the dually flat structures of mixture families with exponential families.
	In information geometry, $(\calE_{t,k,\mu},\KL,\nabla_\KL,\nabla_\KL^*)=(\calE_{t,k,\mu},g,\nabla^e,\nabla^m)$ and 
	$(\calM,\KL,\nabla_\KL,\nabla_\KL^*)=(\calM,g,\nabla^m,\nabla^e)$
	where $g$ is the {\em Fisher information metric tensor} and $\nabla^e$ and $\nabla^m$ are the exponential and mixture connections, respectively.
	These connections are dual to each others, see~\cite{IG-2014}.

\subsection{Computational tractability of dually flat statistical manifolds}\label{sec:comptract}

The previous section explained the dually flat structures (i.e., Bregman geometry) of the exponential family manifold and of the mixture family manifold.
However these geometries may be purely theoretical as the Bregman generator $F$ may not be available in closed form so that the Bregman toolbox cannot be used in practice.
This work tackles this problem faced in exponential and mixture family manifolds by proposing the novel framework of
 {\em Monte Carlo Information Geometry} (MCIG).
MCIG approximates the untractable Bregman geometry by considering the Monte Carlo stochastic integration of the definite integral-based ideal Bregman generator.

\begin{table}
\centering
\begin{tabular}{|l||l|l|l|}\hline
Type & $F$ & $\nabla F^*$ & Example\\ \hline\hline
Type 1 & closed-form & closed-form & Gaussian (exponential) family \\
Type 2 & closed-form & not closed-form & Beta (exponential) family\\
Type 3 & comp. intractable &  not closed-form &  Ising family~\cite{BMFamily-2013} \\
Type 4 & not closed-form & not closed-form & Polynomial exponential family~\cite{PEF-2016}\\
Type 5 & not analytic & not analytic &  mixture family \\ \hline
\end{tabular}

\caption{A smooth and strictly convex function $F$ induces a dually flat structure: 
We classify those structures according to their computational tractability properties.\label{tab:level}}

\end{table}

But first, let us quickly review the five types of tractability  of Bregman geometry in the context of statistical manifolds by giving an illustrating family example for each type:

\begin{description}
\item[Type 1.]  $F$ and $\nabla F^*$ are both available in closed-form, and so are $\nabla F$ and $F^*$.
For example, this is the case of the  {\em the Gaussian exponential family}.
The normal distribution~\cite{flashcards-2009} has sufficient statistic vector $t(x)=(x,x^2)$ so that its log-normalizer is
\begin{equation}
F(\theta)=\log\left(\int_{-\infty}^{+\infty} \exp(\theta_1x+\theta_2 x^2) \dx\right).
\end{equation}

Since $\int_{-\infty}^{\infty} \exp(\theta_1x+\theta_2 x^2)=
\sqrt{\frac{\pi}{-\theta_2}}\exp(-\frac{\theta_1^2}{4\theta_2})$ for $\theta_2<0$, we find:
\begin{equation}
F(\theta)=\log\left(\int \exp(\theta_1x+\theta_2 x^2) \dx\right)
=-\frac{\theta_1^2}{4\theta_2}+\frac{1}{2}\log\frac{\pi}{-\theta_2}.
\end{equation}
This is in accordance with the direct canonical decomposition~\cite{flashcards-2009} of the density $p(x;\theta)=\exp(\inner{t(x)}{\theta}-F(\theta))$ of the normal density 
$p(x;\mu,\sigma)=\frac{1}{\sqrt{2\pi}\sigma}\exp(-\frac{(x-\mu)^2}{2\sigma^2})$.

\begin{remark}
When $F(\theta)$ can be expressed using the canonical decomposition of exponential families, this means that 
the definite integral $\log(\int \exp(\inner{t(x)}{\theta}+k(x))\dx)$ is available in closed form, and vice-versa.
\end{remark}

\item[Type 2.]  $F$ is available in closed form (and so is $\nabla F$) but $\nabla F^*$ is not available in closed form (and therefore $F^*$ is not available too).
This is for example the {\em  Beta exponential family}. 
A Beta distribution $\mathrm{Be}(\alpha,\beta)$ has density on support $x\in (0,1)$:
\begin{equation}
p(x;\alpha,\beta)=\frac{1}{B(\alpha,\beta)}x^{\alpha-1}(1-x)^{\beta-1},
\end{equation}
where $B(\alpha,\beta)=\frac{\Gamma(\alpha)\Gamma(\beta)}{\Gamma(\alpha+\beta)}$, and $(\alpha>0,\beta>0)$ are the shape parameters.
The Beta family of distributions is an exponential family with $\theta=(\alpha,\beta)$, $t(x)=(\log(x),\log(1-x))$, $k(x)=-\log(x)-\log(1-x)$ and
$F(\theta)=\log B(\theta_1,\theta_2)=\log\Gamma(\theta_1)+\log\Gamma(\theta_2)-\log\Gamma(\theta_1+\theta_2)$.
Note that we could also have chosen $\theta=(\alpha-1,\beta-1)$ and $k(x)=0$.
Thus $\nabla F(\theta)=(\psi(\theta_1)-\psi(\theta_1+\theta_2),\psi(\theta_2)-\psi(\theta_1+\theta_2))$ where $\psi(x)=\frac{\Gamma'(x)}{\Gamma(x)}$ is the digamma function. 
Inverting the gradient $\nabla F(\theta)=\eta$ to get $\eta=\nabla F^*(\theta)$ is not available in closed-form.\footnote{To see this, consider the digamma difference property: $f_\Delta(\theta)=\psi(\theta)-\psi(\theta+\Delta)=-\sum_{i=0}^{\Delta-1} \frac{1}{x+i}$ for $\Delta\in\bbN$. 
We cannot invert $f_\Delta(\theta)$ since it involves solving the root of a high-degree polynomial.}


\item[Type 3.] This type of families has discrete support $\calX$ that requires an exponential time to compute the log-normalizer.
For example, consider the Ising models~\cite{MRF-1986,Ising-2000,Ising-2018}:
Let $G = (V, E)$ be an undirected graph of $|V|$ nodes and $|E|$ edges. Each node $v\in V$ is associated with a
binary random variable $x_v\in\{0, 1\}$. 
The probability of an Ising model is defined as follows:
\begin{equation}
p(x;\theta)=\exp\left( 
\sum_{v\in V} \theta_v x_v +   \sum_{(v,w)\in E} \theta_{vw} x_vx_w -F(\theta)
 \right).
\end{equation}
The vector $t(x)=(\ldots,x_v,\ldots,x_{vw},\ldots)$ of sufficient statistics is $D$-dimensional with $D=|V|+|E|$.
The log-normalizer is:
\begin{equation}
F(\theta)=\log\left(
\sum_{(x_v)_v\in\{0,1\}^{|V|}} \left( \exp \sum_{v\in V} \theta_vx_v +   \sum_{(v,w)\in E} \theta_{vw}x_vx_w \right)
\right).
\end{equation}
It requires to sum up $2^{|V|}$ terms.

\item[Type 4.] This type of families has provably the Bregman generator that is not available in closed-form.
For example, this is the case of
the {\em Polynomial Exponential Family}~\cite{Cobb-1983,PEF-2016} (PEF) that are helpful to model a multimodal distribution (instead of using a statistical mixture).
Consider the following vector of sufficient statistics $t(x)=(x,x^2,\ldots, x^D)$ for defining an exponential family:

\begin{equation}
\calE_{t(x),\mu}=\left\{ p(x;\theta)=\exp\left(\sum_{i=1}^D \theta_i x^i-F(\theta)\right) \st \theta\in\Theta\right\}.
\end{equation}
(Beware that here, $x^i=\mathrm{Pow}(x,i)\eqdef \underbrace{x\times\ldots\times x}_{\mbox{$i$ times}}$ denotes the $i$-th power of $x$ (monomial of degree $i$), and not a contravariant coefficient of a vector $x$.)
 
In general, the definite integral of the cumulant function (the Exponential Family Bregman Generator, EFBG) of Eq.~\ref{eq:cf} does not admit a closed form, but is analytic.
For example, choosing $t(x)=x^8$, we have:
\begin{equation}
F(\theta)=\log \int_{-\infty}^\infty \exp(\theta x^8) \dx = \log 2+\log \Gamma(9/8)-\frac{1}{8}\log(-\theta),
\end{equation}
for $\theta<0$.
But $\int_{-\infty}^\infty \exp(-x^8-x^4-x^2) \dx \simeq 1.295$ is not available in closed form.

\item[Type 5.] This last category is even more challenging from a computational point of view because of log-sum terms.
For example, the {\em mixture family}.
As already stated, the negative Shannon entropy (i.e., the Mixture Family Bregman Generator, MFBG) is not available in closed form for statistical mixture models~\cite{wmixture-2017}.
It is in fact even worse, as the Shannon entropy of mixtures is not analytic~\cite{KLnotanalytic-2004}.

\end{description}

This paper considers approximating the computationally untractable generators of statistical exponential/mixture families (type $4$ and type $5$) using stochastic Monte Carlo approximations.

In~\cite{Critchley-CIG}, Critchley et al. take a different approach of the computational tractability by discretizing the support $\calX$ into a finite number of bins, and considering the corresponding discrete distribution. However, this approach does not scale well with the dimension of the support. Our Monte Carlo Information Geometry scales to arbitrary high dimensions because it relies on the fact that the Monte Carlo stochastic estimator is independent of the dimension~\cite{MC-2014}.

\subsection{Paper organization}
In \S\ref{sec:MCIGMF}, we consider the MCIG structure of mixture families: 
Namely, \S\ref{sec:MCIGMM1D} considers first the uni-order families just to illustrate the basic principle. 
It is followed by the general case in \S\ref{sec:MCIGMMDD}.
Similarly, \S\ref{sec:MCIGEF} handles the exponential family case by first explaining the uni-order case in \S\ref{sec:MCIGEF1D} before tackling the general case in \S\ref{sec:MCIGEFDD}.
\S\ref{sec:MCIGClust} presents an application of the computationally-friendly MCIG structures for clustering distributions in dually flat statistical  mixture manifolds.
Finally, we conclude and discuss several perspectives in \S\ref{sec:concl}.

\section{Monte Carlo Information Geometry of Mixture Families}\label{sec:MCIGMF}

Recall the definition  of a statistical mixture model (Definition~\ref{def:mf}):
Given a set of $k$ prescribed statistical distributions $p_0(x),\ldots, p_{k-1}(x)$, all sharing the same support $\calX$, a
 {\em mixture family} $\calM$ of order $D=k-1$ consists in all {\em strictly convex combinations} of the $p_i(x)$'s~\cite{wmixture-2017}:

\begin{equation}
\calM \eqdef  \left\{ m(x;\eta)= \sum_{i=1}^{k-1} \eta_ip_i(x)+ (1-\sum_{i=1}^{k-1} \eta_i)p_0(x) \st \eta_i>0, \sum_{i=1}^{k-1} \eta_i<1 \right\}.
\end{equation}

The differential-geometric structure of $\calM$ is well studied in information geometry~\cite{IG-2014,IG-2016} (although much less than for the exponential families), where it is known that:
\begin{equation}
\KL(m(x;\eta):m(x;\eta')) = B_G(\eta:\eta'),
\end{equation}
for the Bregman generator being the Shannon negative entropy (MFBG):
\begin{equation} 
G(\eta)=-h(m(x;\eta))=\int_{x\in\calX} m(x;\eta)\log m(x;\eta) \dmu(x).
\end{equation}
 
The negative entropy $G(\eta)=\int_{x\in\calX} m(x;\eta)\log m(x;\eta)\dmu(x)$ is a smooth and strictly convex function which induces a dually flat structure with Legendre convex conjugate:
\begin{equation}
F(\theta)=G^*(\theta)=-\int_{x\in\calX} p_0(x)\log m(x;\eta)\dmu(x)=h^\times(p_0(x):m(x;\eta)),
\end{equation} 
interpretable as the cross-entropy of $p_0(x)$ with the mixture $m(x;\eta)$~\cite{wmixture-2017}.

Notice that the component distributions may be heterogeneous like $p_0(x)$ being a fixed Cauchy distribution, $p_1(x)$ being a fixed Gaussian distribution, $p_2(x)$ a Laplace distribution, etc.
Except for the case of the finite categorical distributions (that are interpretable both as either a mixture family and an exponential family, see~\cite{IG-2016}), $G(\eta)$  provably does not admit a closed form~\cite{KLnotanalytic-2004} (i.e., meaning that the definite integral of Eq.~\ref{eq:negent} does not admit a simple formula using common standard functions). 
Thus the    dually-flat geometry $(\calM,G)$ is a theoretical construction that cannot be explicitly used by Bregman algorithms.

One way to tackle the lack of closed form of Eq.~\ref{eq:negent}, is to approximate definite integrals whenever they are used by using Monte Carlo stochastic integration.
However, this is computationally very expensive, and, even worse, it cannot guarantee that the overall computation is consistent.

Let us briefly explain the meaning of {\em consistency}:
We can estimate the KL between two distributions $p$ and $q$ by drawing $m$ variates $x_1,\ldots,x_m \sim p(x)$, and use the 
the following MC KL estimator:
\begin{equation}
\widehat{\KL}_m(p:q) \eqdef \frac{1}{m} \sum_{i=1}^m    \log \frac{p(x_i)}{q(x_i)}.
\end{equation}

Now, suppose we have $\KL(p:q)\leq \KL(q:r)$, then their MC estimates may not satisfy $\widehat{\KL}_m(p:q)<\widehat{\KL}_m(q:r)$ (since each time we evaluate a $\widehat{\KL}_m$ we draw different variates).
Thus when running a KL/Bregman algorithm, the more MC stochastic  approximations of integrals are performed in the algorithm, the less likely is the output consistent.
For example, consider computing the Bregman Voronoi diagram~\cite{BVD-2007} of a set of $n$ mixtures belonging to a mixture family manifold (say, with $D=2$) using the algorithm explained in~\cite{BVD-2007}: 
Since we use for each BD calculation or predicate evaluation relying on $F$ or $F^*$ stochastic Monte Carlo integral approximations, this MC algorithm may likely not deliver a proper  combinatorial structure of the Voronoi diagram as its output: The Voronoi structure is likely to be inconsistent.

Let us now show how Monte Carlo Information Geometry (MCIG) approximates this computationally untractable $(\calM,G)$ geometric structure by defining a consistent and computationally-friendly dually-flat information geometry
 $(\calM,\tildeG_\calS)$ for a finite identically and independently distributed (iid) random sample $\calS$ of prescribed size $m$.

\subsection{MCIG of Order-$1$ Mixture Family}\label{sec:MCIGMM1D}
In order to highlight the principle of MCIGs, let us first consider a mixture family of order $D=1$.
That is, we consider a set of  mixtures of $k=2$ components with density: 
\begin{equation}
m(x;\eta)=\eta p_1(x)+(1-\eta)p_0(x)=p_0(x)+\eta(p_1(x)-p_0(x)),
\end{equation} 
with parameter $\eta$ ranging in $(0,1)$.
  The two prescribed component densities  $p_0(x)$ and $p_1(x)$ (with respect to a base measure $\mu$, say the Lebesgue measure) are defined on a common support $\calX$.
Densities $p_0(x)$ and $p_1(x)$ are assumed to be linearly independent~\cite{IG-2014}.

Figure~\ref{fig:mmexample} displays an example of uni-order mixture family with heterogeneous components: 
$p_0(x)$ is chosen as a Gaussian distribution while $p_1(x)$ is taken as a Laplace distribution.
A mixture $m(x;\eta)$ of $\calM$ is visualized as a point $P$ (here, one-dimensional) with $\eta(P)=\eta$.

Let $\calS=\{x_1,\ldots, x_m\}$ denote a iid sample from a fixed {\em proposal distribution} $q(x)$ (defined over the same support $\calX$, and independent of $\eta$). We approximate the  Bregman generator $G(\eta)$ using Monte Carlo stochastic integration with importance sampling as follows:

\begin{equation}\label{eq:mfag1D}
G(\eta) \simeq  \tildeG_\calS(\eta) \eqdef \frac{1}{m} \sum_{i=1}^m  \frac{1}{q(x_i)} m(x_i;\eta)\log m(x_i;\eta).
\end{equation}

\begin{figure}%
\centering
\includegraphics[width=0.85\columnwidth]{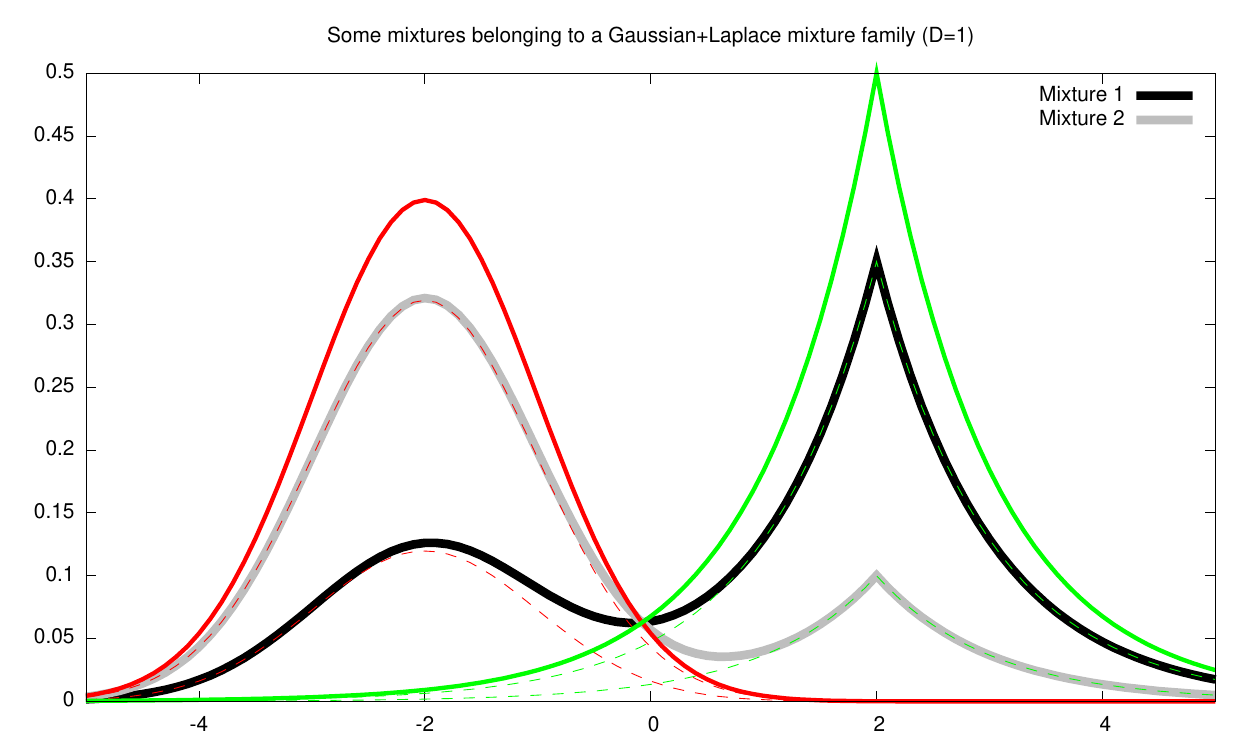}%

\caption{Example of a mixture family of order $D=1$ ($k=2$): $p_0(x)\sim \mathrm{Gaussian}(-2,1)$ (red) and $p_1(x)\sim \mathrm{Laplace}(2,1)$ (green). 
The two mixtures are $m_1(x)=m(x;\eta_1)$ (black) with $\eta_1=0.7$ and $m_2(x)=m(x;\eta_2)$ (grey) with $\eta_2=0.2$. Weighted component distributions  are displayed in dashed.
 }%
\label{fig:mmexample}%
\end{figure}

Let us prove that the Monte Carlo function $\tildeG_\calS(\eta)$ is a proper Bregman generator.
That is, that  $\tildeG_\calS(\eta)$ is strictly convex and twice continuously differentiable (Definition~\ref{def:bg}).

Write for short $m_x(\eta) \eqdef m(x;\eta)$ so that $G(\eta)=\int_{x\in\calX} m_x(\eta)\log m_x(\eta)\dmu(x)$ is 
approximated by  $\frac{1}{m} \sum_{i=1}^m  \frac{1}{q(x_i)} m_{x_i}(\eta)\log m_{x_i}(\eta)$.
Since $\frac{1}{m} \frac{1}{q(x_i)}>0$, it suffices to prove that the  basic function $g_x(\eta)=m_{x}(\eta)\log m_{x}(\eta)$ is strictly convex wrt parameter $\eta$.
Then we shall conclude that $\tildeG_\calS(\eta)$ is strictly convex because it is the  finite positively weighted sum of strictly convex functions.

Let us write the first and second derivatives of $g_x(\eta)$ as follows:
\begin{eqnarray}
g_x(\eta)' &=&m_{x}(\eta)'(\log m_{x}(\eta)+1),\\
g_x(\eta)'' &=& m_{x}(\eta)''(\log m_{x}(\eta)+1)+\frac{(m_{x}(\eta)')^2}{m_{x}(\eta)}.
\end{eqnarray}

Since $m'_x(\eta)=p_1(x)-p_0(x)$ and $m''_x(\eta)=0$, we get:

\begin{equation}
g_x(\eta)''= \frac{(p_1(x)-p_0(x))^2}{m_{x}(\eta)}.
\end{equation}
 
Thus it follows that:
\begin{equation}\label{eq:mmg1D}
\tildeG_\calS''(\eta) = \frac{1}{m} \sum_{i=1}^m  \frac{1}{q(x_i)} \frac{(p_1(x_i)-p_0(x_i))^2}{m(x_i;\eta)} \geq 0.
\end{equation}
It is strictly convex provided that there exists at least one $x_i$ such that $p_1(x_i)\not= p_0(x_i)$.

Let $\calD\subset\calX$ denote the degenerate set $\calD=\{x\in\calX \st p_1(x) = p_0(x) \}$.
For example, if $p_0(x)$ and $p_1(x)$ are two distinct univariate normal distributions, then $|\calD|=2$ (roots of a quadratic equation), and 
\begin{equation}
\mu_q(\calD) \eqdef \int_{x\in\calX} 1_{[p_0(x) = p_1(x)]} q(x)\dmu(x) = 0.
\end{equation}

\begin{assumption}[AMF1D]
We assume that $p_0(x)$ and $p_1(x)$ are linearly independent (non-singular statistical model, see~\cite{IG-2014}), and that $\mu_q(\calD)=0$.
\end{assumption}

\begin{lemma}[Monte Carlo Mixture Family Function is a Bregman generator]
The Monte Carlo Mixture Family Function (MCMFF) $\tildeF_\calS(\theta)$ is a Bregman generator almost surely.
\end{lemma}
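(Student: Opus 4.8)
The plan is to verify, for the Monte Carlo mixture family function $\tildeG_\calS$ (the MCMFF), the two defining conditions of Definition~\ref{def:bg}: twice continuous differentiability and strict convexity on the open parameter interval $(0,1)$. Since $\tildeG_\calS(\eta)=\frac{1}{m}\sum_{i=1}^m \frac{1}{q(x_i)}\, g_{x_i}(\eta)$ with $g_x(\eta)=m_x(\eta)\log m_x(\eta)$ and weights $\frac{1}{m\,q(x_i)}>0$ (each $q(x_i)>0$ almost surely, as points where $q$ vanishes are never sampled), it suffices to establish both properties for each single-sample term $g_{x_i}$ and then appeal to the elementary fact that a finite, strictly positively weighted sum of $C^2$ strictly convex functions is itself $C^2$ and strictly convex.

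First I would settle smoothness. For a fixed sample point $x$, the affine map $\eta\mapsto m_x(\eta)=p_0(x)+\eta(p_1(x)-p_0(x))$ is strictly positive for every $\eta\in(0,1)$ whenever $x\notin\calD$ (indeed, if $x\notin\calD$ then $p_0(x)$ and $p_1(x)$ are not both zero, so the convex combination remains positive), so the composition of $u\mapsto u\log u$ with it is $C^\infty$, hence $C^2$, on $(0,1)$. The first and second derivatives are exactly those already recorded, yielding $g_x''(\eta)=\frac{(p_1(x)-p_0(x))^2}{m_x(\eta)}$ and therefore the closed form (Eq.~\ref{eq:mmg1D}) for $\tildeG_\calS''$.

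Next comes convexity. From Eq.~\ref{eq:mmg1D} every summand is nonnegative, so $\tildeG_\calS''(\eta)\ge 0$ and $\tildeG_\calS$ is convex; it is strictly convex on $(0,1)$ as soon as the sum is strictly positive for all $\eta$, which holds provided at least one sampled $x_i$ satisfies $p_1(x_i)\ne p_0(x_i)$, i.e. $x_i\notin\calD$.

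The almost-sure conclusion then reduces to a measure-theoretic observation, which is the only genuinely probabilistic — and hence the most delicate — step. Under the AMF1D assumption, $\mu_q(\calD)=0$, so each $x_i\sim q$ lands in $\calD$ with probability zero; by independence $\Pr[\text{all }x_i\in\calD]=0$, equivalently with probability one at least one (in fact every) sample point lies outside $\calD$. On that probability-one event the degenerate summands are absent, $\tildeG_\calS''(\eta)>0$ throughout $(0,1)$, and together with the smoothness above this shows $\tildeG_\calS$ is a Bregman generator almost surely. The main obstacle is thus not the elementary calculus but making the phrase \emph{almost surely} precise: one must invoke the non-singularity/linear-independence hypothesis to guarantee that $\calD$ is $q$-null, argue via independence that the finite sample avoids it with probability one, and take care that $q(x_i)>0$ almost surely so that the importance weights are well defined.
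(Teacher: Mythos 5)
Your proof is correct and follows essentially the same route as the paper: show $\tildeG_\calS''(\eta)=\frac{1}{m}\sum_i \frac{1}{q(x_i)}\frac{(p_1(x_i)-p_0(x_i))^2}{m(x_i;\eta)}>0$ whenever some sample point avoids the degenerate set $\calD$, and invoke $\mu_q(\calD)=0$ (assumption AMF1D) to conclude this happens almost surely. You merely spell out details the paper leaves implicit — positivity of $m_x(\eta)$ on $(0,1)$, well-definedness of the importance weights, and the independence argument — which is fine.
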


\begin{proof}
When there exists a sample $x\in\calS$ with two distinct densities $p_0(x)$ and $p_1(x)$, we have $(p_1(x_i)-p_0(x_i))^2>0$ and
therefore $\tildeG_\calS''(\eta)>0$. 
The probability to get a degenerate sample is almost zero.
\end{proof}

To recap, the MCMFF of the MCIG of uni-order family has the following characteristics:
\vskip 0.5cm
\noindent \fbox{\vtop{\underline{Monte Carlo Mixture Family Generator 1D}: 
\begin{eqnarray} 
\tildeG_\calS(\eta) &=& \frac{1}{m} \sum_{i=1}^m  \frac{1}{q(x_i)} m(x_i;\eta)\log m(x_i;\eta),\\
\tildeG_\calS'(\eta) &=&  \theta = \frac{1}{m} \sum_{i=1}^m  \frac{1}{q(x_i)} (p_1(x_i)-p_0(x_i)) (1+\log m(x_i;\eta)),\\
\tildeG_\calS''(\eta) &=& \frac{1}{m} \sum_{i=1}^m  \frac{1}{q(x_i)} \frac{(p_1(x_i)-p_0(x_i))^2}{m(x_i;\eta)}.
\end{eqnarray}
}}
\vskip 0.5cm

Note that $(G^*)'$ and $G^*$ may be calculated numerically but not in closed-form.
We may also MC approximate $\nabla G^*$ since $\theta=(h^\times(p_0:m)-h^\times(p_i:m))_i$.

Thus we change from type 5 to type 2 the computational tractability of mixtures by adopting the MCIG approximation. 

Figure~\ref{fig:mmgeneratorseq} displays a series of  Bregman mixture family MC generators for a mixture family for different values of $|\calS|=m$.

\begin{figure}%
\centering

\includegraphics[width=0.8\columnwidth]{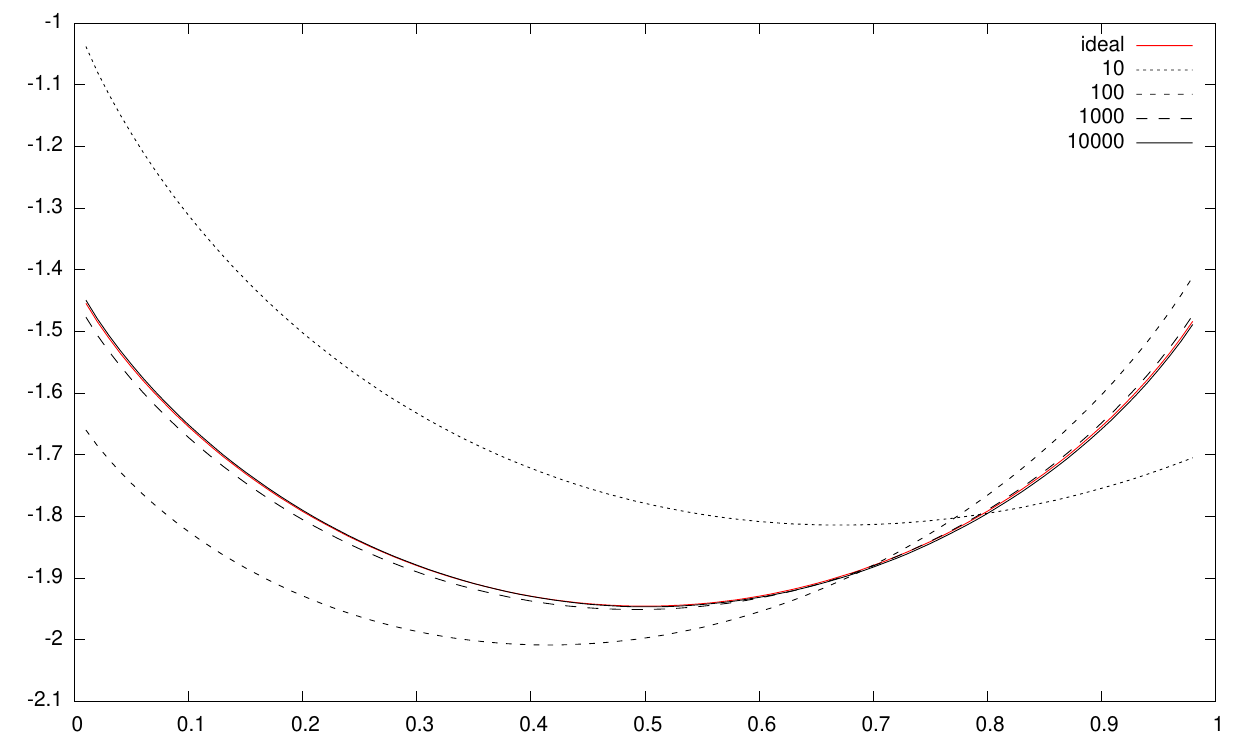}

\caption{A series $G_\calS(\eta)$ of Bregman Monte Carlo Mixture Family generators (for $m=|\calS|\in\{10,100,1000,10000\}$) approximating the untractable ideal negentropy generator $G(\eta)=-h(m(x;\eta))$ (red) of a  mixture family with prescribed Gaussian distributions $m(x;\eta)=(1-\eta)p(x;0,3)+\eta p(x;2,1)$ for the proposal distribution $q(x)=m(x;\frac{1}{2})$.
 }%
\label{fig:mmgeneratorseq}%
\end{figure}

%
%

As we increase the sample size of $\calS$, the MCMFF Bregman generator tends to the ideal mixture family Bregman generator.

\begin{theorem}[Consistency of MCIG]
Almost surely, $\lim_{m\rightarrow\infty} (\calM,\tildeG_{\calS}) = (\calM,G)$ when $\mu_q(\calD)=0$.
\end{theorem}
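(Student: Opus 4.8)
The plan is to reduce the convergence of the entire dually flat structure to the convergence of the Monte Carlo generator together with its first two $\eta$-derivatives, and to obtain that convergence from the strong law of large numbers (SLLN) reinforced by convexity. First I would record the importance-sampling unbiasedness that is built into Eq.~\ref{eq:mfag1D}: since $x_1,\ldots,x_m$ are iid draws from $q$,
\begin{equation}
E_q\!\left[\frac{1}{q(X)}\,m(X;\eta)\log m(X;\eta)\right] = \int_{x\in\calX} m(x;\eta)\log m(x;\eta)\,\dmu(x) = G(\eta),
\end{equation}
and the analogous identities hold termwise for $\tildeG_\calS'$ and $\tildeG_\calS''$, whose $q$-expectations are exactly $G'(\eta)$ and $G''(\eta)$ (differentiation under the integral is justified because $m_x(\eta)$ is affine in $\eta$). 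Hence, for each \emph{fixed} $\eta\in(0,1)$, the SLLN gives $\tildeG_\calS(\eta)\to G(\eta)$, $\tildeG_\calS'(\eta)\to G'(\eta)$ and $\tildeG_\calS''(\eta)\to G''(\eta)$ almost surely, provided the importance-weighted integrands are $q$-integrable; this is a mild condition following from the finiteness of $G(\eta)$ and of the Fisher information $G''(\eta)$ on the open parameter domain.

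The subtle point is that these are one-$\eta$-at-a-time statements, and one cannot take a union over the uncountably many $\eta$. To pass to a single full-measure event on which convergence holds for \emph{all} $\eta$ simultaneously, I would exploit the shape of the three functions. Fix a countable dense set $Q\subset(0,1)$; intersecting the countably many almost-sure events indexed by $Q$ keeps probability one, so on a full-measure event the three functions converge at every point of $Q$. Now $\tildeG_\calS$ is convex (preceding lemma), $\tildeG_\calS''$ is itself convex in $\eta$ by Eq.~\ref{eq:mmg1D} (each summand is a positive constant times the reciprocal of the positive affine map $\eta\mapsto m(x_i;\eta)$), and $\tildeG_\calS'$ is nondecreasing (being the derivative of a convex function); the same three properties hold for $G$, $G''$ and $G'$. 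By the classical fact that convex (resp. monotone) functions converging on a dense subset of an open interval converge locally uniformly (resp. at every continuity point of the limit), the control on $Q$ propagates to locally uniform convergence $\tildeG_\calS\to G$ and $\tildeG_\calS''\to G''$ and to $\tildeG_\calS'(\eta)\to G'(\eta)$ at every $\eta$, all limits being continuous. The hypothesis $\mu_q(\calD)=0$ guarantees, via the lemma, that on this event each $\tildeG_\calS$ is moreover strictly convex, so it is genuinely a Bregman generator and the limiting geometry is the one of Eq.~\ref{eq:KLBDMF}.

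Finally I would read off convergence of the whole dually flat structure of Table~\ref{tab:dfs} from the three convergences. The metric tensor is $\tildeG_\calS''\to G''$; the primal-to-dual conversion $\theta=\tildeG_\calS'(\eta)\to G'(\eta)$ gives convergence of the dual coordinate system; and the canonical divergence
\begin{equation}
B_{\tildeG_\calS}(\eta:\eta') = \tildeG_\calS(\eta)-\tildeG_\calS(\eta')-(\eta-\eta')\,\tildeG_\calS'(\eta')
\end{equation}
converges pointwise to $B_G(\eta:\eta')=\KL(m(x;\eta):m(x;\eta'))$. Because locally uniform convergence of convex functions is preserved under the Legendre--Fenchel transform, the dual potential $\tildeG_\calS^{*}$ and the dual map $\nabla\tildeG_\calS^{*}$ converge to $G^{*}$ and $\nabla G^{*}$, and hence so do the primal and dual geodesics. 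Collecting these, almost surely $(\calM,\tildeG_\calS)\to(\calM,G)$ as $m\to\infty$.

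The main obstacle is precisely the passage from per-$\eta$ almost-sure convergence (immediate from the SLLN) to a single almost-sure event carrying simultaneous convergence for all $\eta$ and for the derivatives: a naive union bound fails over the uncountable parameter set, and it is the convexity/monotonicity of the generator and of each derivative—together with $\mu_q(\calD)=0$ to preserve strict convexity in the limit event—that lets a countable dense set suffice. The remaining ingredients (the $q$-integrability needed for the SLLN, and the continuity of the Legendre transform under this mode of convergence) are routine once $G$ and $G''$ are assumed finite on the interior of the domain.
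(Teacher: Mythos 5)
Your proposal is correct, and it does substantially more than the paper's own argument. The paper's proof is a two-line sketch: it declares that it suffices to show $\lim_{m\to\infty}\tildeG_\calS(\eta)=G(\eta)$ pointwise, and invokes the general theory of Monte Carlo integration under the condition $\Var_q\bigl[m(x;\eta)\log m(x;\eta)/q(x)\bigr]<\infty$ (an $L^2$ condition, strictly stronger than the $L^1$ integrability you need for the SLLN). It says nothing about why pointwise convergence of the generator alone should entail convergence of the geometry, nor about the derivatives, the conjugate, or uniformity in $\eta$. You fill exactly these gaps: the countable-dense-set argument combined with convexity of $\tildeG_\calS$ and of $\tildeG_\calS''$ (and monotonicity of $\tildeG_\calS'$) to upgrade per-$\eta$ almost-sure convergence to a single full-measure event with locally uniform convergence, and then the stability of the Legendre--Fenchel transform under this mode of convergence to carry the dual potential, dual coordinates, metric tensor and geodesics along. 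This is the honest content behind the paper's shorthand $\lim_m(\calM,\tildeG_\calS)=(\calM,G)$, and your identification of the uncountable-union issue as the real obstacle is exactly right. Two minor points if you wanted to polish it: the interchange of differentiation and integration giving $E_q[\cdot]=G'(\eta),G''(\eta)$ needs a local domination argument, not just affineness of $\eta\mapsto m_x(\eta)$; and "finiteness of $G(\eta)$" should be read as $\int|m\log m|\,\dmu<\infty$ (existence of the entropy) for the SLLN hypothesis to follow as you claim. Neither affects the structure of the argument.
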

\begin{proof}
It suffices to prove that $\lim_{m\rightarrow\infty} \tildeG_{\calS}(\eta) = G(\eta)$.
The general theory of Monte Carlo stochastic integration yields a consistent estimator provided that the following variance is bounded
\begin{equation}
\Var_q\left[\frac{m(x;\eta)\log m(x;\eta)}{q(x)}\right]<\infty.
\end{equation}
 
For example, when $m(x;\eta)$ is a mixture of prescribed isotropic gaussians (say, from a KDE), and $q(x)$ is also an isotropic Gaussian, the variance is bounded.
Note that $q$ is the proposal density wrt the base measure $\mu$.
\end{proof}

In practice, the proposal distribution $q(x)$ can be chosen as the uniform  mixture of the fixed component distributions: 
\begin{equation}
q(x)=\frac{1}{m} \sum_{i=0}^D p_i(x).
\end{equation}

Notice that the Monte Carlo Mixture Family Function is a random variable (rv) estimator itself by considering a vector of iid variables instead of a sample variate: $\hatG_m(\eta)$.
Figure~\ref{fig:MCMFBG1D} displays five realizations of the random variable $\hatG_m(\eta)$ for $m=10$.

\begin{figure}%
\centering

\includegraphics[width=0.8\columnwidth]{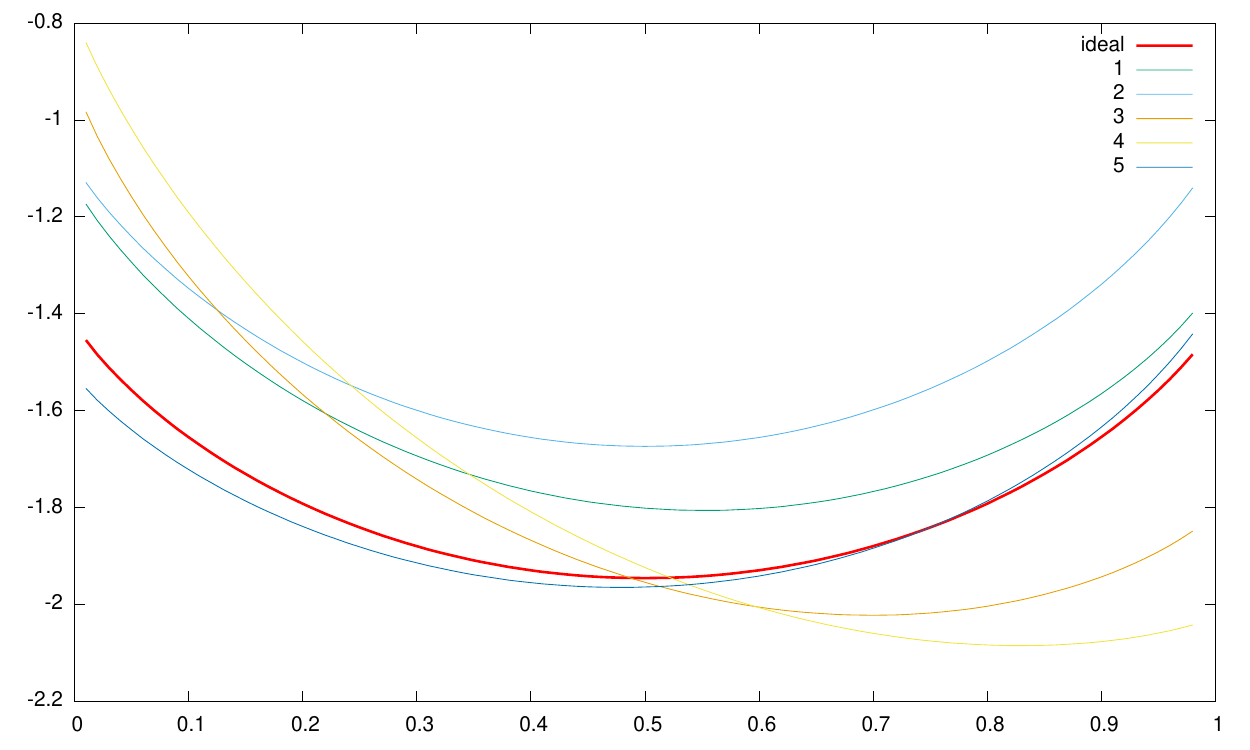}%

\caption{The Monte Carlo Mixture Family Generator $\hatG_{10}$ (MCMFG) considered as a random variable: Here, we show five realizations (i.e., $\calS_1,\ldots,\calS_5$) of the randomized generator for $m=5$. The ideal generator is plot in thick red.
 }%
\label{fig:MCMFBG1D}%
\end{figure}

\subsection{General $D$-order mixture case}\label{sec:MCIGMMDD}

Here, we consider statistical mixtures with $k=D+1>2$ prescribed distributions $p_0(x),\ldots, p_D(x)$.
The component distributions are linearly independent so that they define a non-singular statistical model~\cite{IG-2014}.

We further strengthen conditions on the prescribed distributions as follows:

\begin{assumption}[AMF]
We assume that the linearly independent prescribed distributions further satisfy:
\begin{equation}
  \mathrm{sup}_{B \in \mathcal{B}}\left\{ \mu_q(B) : \exists \lambda
    \neq (0), \sum_{i \neq j} \lambda_i \left( \left.p_i\right|_B - \left.p_j\right|_B
    \right) = 0  \right\} = 0, \quad \forall j,
\end{equation}
where the supremum is over all subsets $B$ of the $\sigma$-algebra
$\mathcal{B}$ of the probability space with support $\mathcal{X}$ and
measure $\mu$,  with $\left.p_i\right|_B$ denoting the restriction of
$p_i$ to subset $B$. In other words, we impose that
the components $(p_i)_i$ still constitute an
affinely independent family when restricted to any subset of positive
measure.
\end{assumption}
For example, Figure~\ref{fig:mmexample2D} displays two mixture distributions belonging to a 2D mixture family with Gaussian, Laplace and Cauchy component distributions.

Recall that the mixture family Monte Carlo generator is:
\begin{equation}
\tildeG_\calS(\eta) = \frac{1}{m} \sum_{i=1}^m  \frac{1}{q(x_i)} m(x_i;\eta)\log m(x_i;\eta).
\end{equation}

In order to prove that $G$ is strictly convex, we shall prove that
$\nabla^2 \tildeG_\calS(\eta)\succ 0$ almost surely.
It suffices to consider the basic Hessian matrix $\nabla^2 g_x=(\partial^i\partial^j g_x(\eta))_{ij}$ of $g_x(\eta)=m_x(\eta)\log m_x(\eta)$.
We have the partial first derivatives:
\begin{equation}
\partial^i g_x(\eta)=(p_i(x)-p_0(x))(1+\log m(x;\eta)),
\end{equation}
and the partial second derivatives:
\begin{equation}
\partial^i \partial^j g_x(\eta)= \frac{(p_i(x)-p_0(x))(p_j(x)-p_0(x))}{m(x;\eta)},
\end{equation}
so that
\begin{equation}\label{eq:MMGDD}
\partial^i \partial^j \tildeG_\calS(\eta) =   \frac{1}{m} \sum_{l=1}^m \frac{1}{q(x_l)} \frac{(p_i(x_l)-p_0(x_l))(p_j(x_l)-p_0(x_l))}{m(x_l;\eta)}.
\end{equation}

\begin{theorem}[Monte Carlo Mixture Family Function is a Bregman generator]\label{thm:MCMF}
The Monte Carlo multivariate function $\tildeG_\calS(\eta)$ is always convex and twice continuously differentiable, and strictly convex  almost surely.
\end{theorem}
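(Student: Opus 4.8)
The plan is to read everything off the Hessian already computed in Eq.~\ref{eq:MMGDD}. Writing $v(x) \eqdef (p_1(x)-p_0(x),\ldots,p_D(x)-p_0(x))^\top \in \bbR^D$ and $v_l \eqdef v(x_l)$, that formula is exactly the rank-one decomposition
\begin{equation}
\nabla^2 \tildeG_\calS(\eta) = \frac{1}{m}\sum_{l=1}^m \frac{1}{q(x_l)\,m(x_l;\eta)}\, v_l v_l^\top ,
\end{equation}
in which every scalar weight $c_l \eqdef \frac{1}{m\,q(x_l)\,m(x_l;\eta)}$ is strictly positive, since $q$ and $m(\cdot;\eta)$ are densities that are positive at the sampled points (on the interior of the parameter domain no logarithm is evaluated at $0$). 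Each outer product $v_l v_l^\top$ is positive semidefinite, so the Hessian is a nonnegative combination of positive semidefinite matrices and hence $\nabla^2\tildeG_\calS(\eta)\succeq 0$ for \emph{every} sample; together with the smoothness of $t\mapsto t\log t$ on $t>0$ and the affine dependence of $m(x;\eta)$ on $\eta$, this settles the ``always convex and twice continuously differentiable'' claim.

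The substantive part is strict convexity, i.e. $\nabla^2\tildeG_\calS(\eta)\succ 0$ almost surely. For a combination $\sum_l c_l v_l v_l^\top$ with $c_l>0$ one has $u^\top(\nabla^2\tildeG_\calS)u=\sum_l c_l\inner{v_l}{u}^2$, which vanishes iff $u$ is orthogonal to every $v_l$; hence positive definiteness is equivalent to the vectors $\{v_1,\ldots,v_m\}$ spanning $\bbR^D$. In particular this forces $m\ge D$, which I would assume throughout. So the whole theorem reduces to the claim that \emph{almost surely the sampled difference vectors $v_1,\ldots,v_m$ span $\bbR^D$}.

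To prove this I would process the draws sequentially and track the subspace $V_k \eqdef \mathrm{span}(v_1,\ldots,v_k)$. The key estimate is that, conditionally on $x_1,\ldots,x_k$, whenever $\dim V_k < D$ the next vector almost surely leaves $V_k$. Indeed, if $V_k$ is a proper subspace pick any nonzero $\lambda\in V_k^\perp$; then $v_{k+1}\in V_k$ forces $\inner{\lambda}{v_{k+1}}=\sum_{i=1}^D \lambda_i\bigl(p_i(x_{k+1})-p_0(x_{k+1})\bigr)=0$, i.e. $x_{k+1}$ lands in the degenerate set $Z_\lambda \eqdef \{x : \sum_i\lambda_i(p_i(x)-p_0(x))=0\}$. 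This is where Assumption~(AMF) enters: taking $j=0$ there says precisely that no set of positive $\mu_q$-measure can carry a nontrivial linear dependence of the $p_i-p_0$, so $\mu_q(Z_\lambda)=0$ for every fixed $\lambda\neq 0$. Since this null bound is uniform over $\lambda$, the conditional probability that $v_{k+1}\in V_k$ is $0$; thus $\dim V_{k+1}=\dim V_k+1$ almost surely as long as $\dim V_k<D$, and an induction on $k$ gives $\dim V_D=D$, hence $\dim V_m=D$, almost surely.

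I expect the main obstacle to be this conditioning step: the bad hyperplane $Z_\lambda$ depends on the random subspace $V_k$, so I must argue that a single uniform null bound $\mu_q(Z_\lambda)=0$ (valid for every fixed nonzero $\lambda$) kills the conditional probability regardless of the realization of $V_k$. The clean route is to condition on the $\sigma$-algebra generated by $x_1,\ldots,x_k$, observe that $V_k$ and a witnessing $\lambda\in V_k^\perp\setminus\{0\}$ are then determined, and apply the independence of $x_{k+1}$ together with $\mu_q(Z_\lambda)=0$. Everything else---the rank-one form of the Hessian and the equivalence between positive definiteness and spanning---is routine linear algebra.
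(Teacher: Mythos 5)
Your proof is correct and rests on the same central idea as the paper's: writing the Hessian of $\tildeG_\calS$ as a positively weighted sum of rank-one outer products $v_l v_l^\top$ built from the difference vectors $(p_i(x_l)-p_0(x_l))_i$, so that positive semidefiniteness is immediate and positive definiteness is equivalent to the sampled vectors spanning $\bbR^D$ (hence requires $m\geq D$). Where you go beyond the paper is the last step: the paper simply asserts that the matrix is full rank ``almost surely when $m\geq D$'' and stops there, whereas you actually prove it, by the sequential conditioning argument that tracks $V_k=\mathrm{span}(v_1,\ldots,v_k)$, picks a witnessing $\lambda\in V_k^\perp\setminus\{0\}$ measurable in $x_1,\ldots,x_k$, and uses Assumption~(AMF) with $j=0$ to conclude that the degenerate set $Z_\lambda$ is $\mu_q$-null, so that by independence the next draw escapes $V_k$ almost surely. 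This is exactly the missing justification, and your handling of the subtlety that $\lambda$ is random (but fixed under the conditioning) is the right way to make it rigorous. One minor point worth stating explicitly: strict convexity on the open parameter simplex also uses that $m(x_l;\eta)>0$ and $q(x_l)>0$ at the sampled points so the weights are finite and strictly positive, which you do note; otherwise nothing is missing.
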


\begin{proof}
Consider the $D$-dimensional vector:
\begin{equation}
v_l = \left[
\begin{array}{c}  
\frac{p_1(x_l)-p_0(x_l)}{\sqrt{q(x_l) m(x_l;\eta)}}\\
\vdots\\
\frac{p_D(x_l)-p_0(x_l)}{\sqrt{q(x_l) m(x_l;\eta)}}\\
\end{array}
\right].
\end{equation}

Then we rewrite the Monte Carlo generator $\tildeG_\calS(\eta)$ as: 
\begin{equation}
\partial^i \partial^j \tildeG_\calS(\eta) =   \frac{1}{m} \sum_{l=1}^m v_l v_l^\top.
\end{equation}

Since $v_l v_l^\top$ is always a symmetric positive semidefinite matrix of rank one,
we conclude that $\tildeG_\calS(\eta) $  is a symmetric positive semidefinite matrix when $m<D$ (rank deficient) and a symmetric positive definite matrix (full rank)  almost surely when $m\geq D$.
\end{proof}

\begin{figure}%
\centering
\includegraphics[width=0.85\columnwidth]{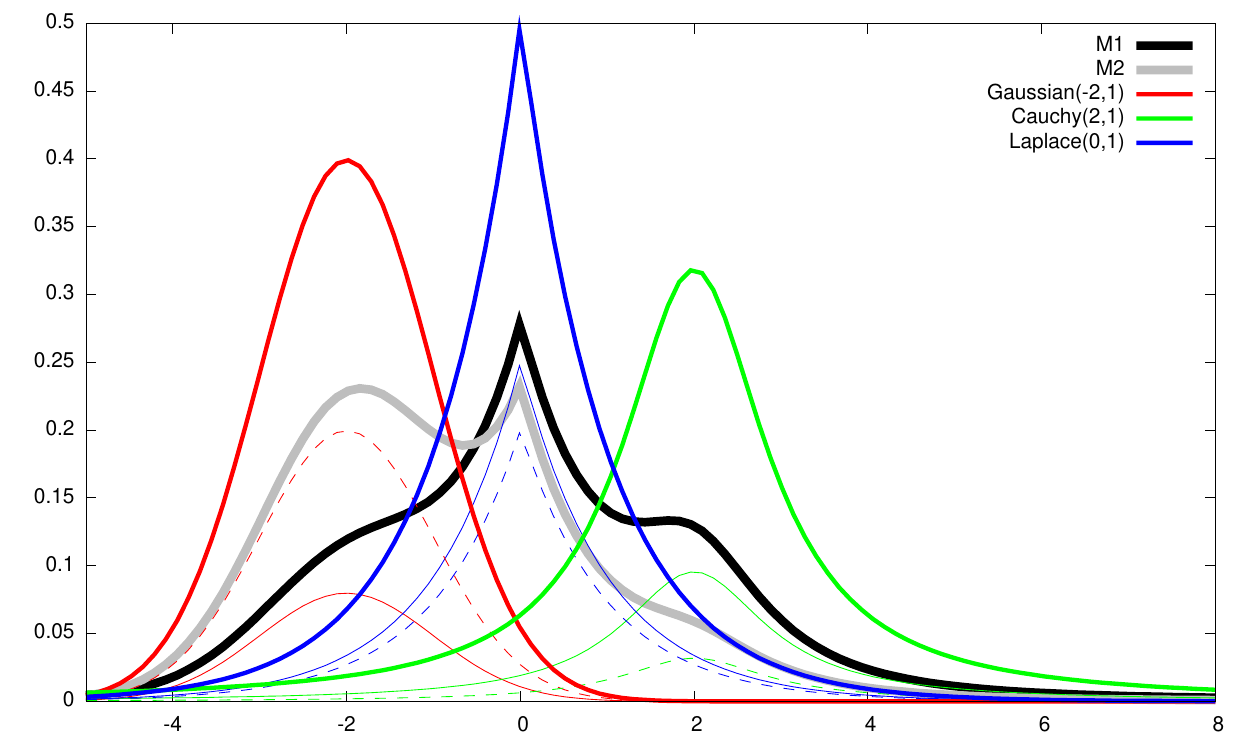}%

\caption{Example of a mixture family of order $D=2$ ($k=3$): $p_0(x)\sim \mathrm{Gaussian}(-2,1)$ (red), $p_1(x)\sim \mathrm{Laplace}(0,1)$ (blue) and
 $p_2(x)\sim \mathrm{Cauchy}(2,1)$ (green). 
The two mixtures are $m_1(x)=m(x;\eta_1)$ (black) with $\eta_1=(0.3,0.5)$ and $m_2(x)=m(x;\eta)$ (gray) with $\eta=(0.1,0.4)$.
 }%
\label{fig:mmexample2D}%
\end{figure}

\section{Monte Carlo Information Geometry of Exponential Families}\label{sec:MCIGEF}

We follow the same outline as for mixture familes: \S\ref{sec:MCIGEF1D} first describes the univariate case.
It is then followed by the general multivariate case in  \ref{sec:MCIGEF1D}.

\subsection{MCIG of Order-$1$ Exponential Family}\label{sec:MCIGEF1D}
We consider the order-$1$ exponential family of parametric densities with respect to a base measure $\mu$:
\begin{equation}
\calE \eqdef \left\{p(x;\theta)=\exp(t(x)\theta-F(\theta)+k(x)) \st \theta\in\Theta \right\},
\end{equation}
where $\Theta$ is the natural parameter space, such that the log-normalizer/cumulant function~\cite{IG-2016} is
\begin{equation}
F(\theta)=\log\left(\int \exp(t(x)\theta+k(x))\dmu(x)\right).
\end{equation}

The sufficient statistic function $t(x)$ and $1$ are linearly independent~\cite{IG-2014}.

We perform Monte Carlo stochastic integration by sampling a set $\calS=\{x_1,\ldots, x_m\}$ of $m$ iid  variates from a proposal distribution $q(x)$ to get:
\begin{equation}\label{eq:fdagger}
F(\theta) \simeq \tildeF_\calS^\dagger(\theta) \eqdef \log \left( \frac{1}{m} \sum_{i=1}^m \frac{1}{q(x_i)} \exp(t(x_i)\theta+k(x_i)) \right).
\end{equation}

Without loss of generality, assume that $x_1$ is the element that minimizes the sufficient statistic $t(x)$ among the elements of $\calS$, so that
$a_i=t(x_i)-t(x_1)\geq 0$ for all $x_i\in\calS$.

Let us factorize $\frac{1}{q(x_1)} \exp(t(x_1)\theta+k(x_1))$ in Eq.~\ref{eq:fdagger} and remove an affine term from the generator $\tildeF_\calS(\theta)$ to get the equivalent generator (see Property~\ref{prop:bdaffine}):
\begin{eqnarray}
\tildeF_\calS^\dagger(\theta)  &\equiv& \tildeF_\calS(\theta),\\
\tildeF_\calS(\theta) &=&\log \left(1+ \sum_{i=2}^m \exp((t(x_i)-t(x_1))\theta+k(x_i)-k(x_1)-\log q(x_i)+\log q(x_1)) \right),\\
&=&\log \left(1+ \sum_{i=2}^m \exp(a_i\theta+b_i)\right),\\
&\eqdef& \lse_0^+(a_2\theta+b_2, \ldots, a_m\theta+b_m),\label{eq:gexpf}
\end{eqnarray}
with $a_2,\ldots, a_m>0$ and $b_i=k(x_i)-k(x_1)-\log q(x_i)+\log q(x_1)$.
Function $\lse_0^+(x_1,\ldots,x_m)=\lse(0,x_1,\ldots,x_m)$ is the log-sum-exp function~\cite{lse-2016,lse-2017} $\lse(x_1,\ldots, x_m)=\log \sum_{i=1}^n \exp(x_i)$ with an additional argument set to zero.

Let us notice that the $\lse_0^+$ function is always {\em strictly convex} while the $\lse$ function is only convex\footnote{
Function lse can be interpreted as a vector function, and is  $C^2$, convex but not strictly convex on $\bbR^m$. 
For example, lse is affine on lines   since $\lse(x+\lambda 1)=\lse(x)+\lambda$ (or equivalently $\lse(x_1,\ldots, x_m)=\lambda + \lse(x_1-\lambda,\ldots,x_m-\lambda)$). It is affine only on lines passing through the origin.}~\cite{Boyd-2004}, p. 74.
Figure~\ref{fig:lselse0p} displays the graph plots of the lse and $\lse_0^+$ functions.
Let us clarify this point with a usual exponential family: The binomial family.
The binomial distribution is a categorical distribution with $D=1$ (and $2$ bins). 
We have $F(\theta)=\log(1+\exp(\theta))=\lse(0,\theta)\eqdef\lse_0^+(\theta)$.
We check the strict convexity of $F(\theta)$: $F'(\theta)=\frac{e^\theta}{1+e^\theta}$ and $F''(\theta)=\frac{e^\theta}{(1+e^\theta)^2}>0$.

We write for short $\lse_0^+(x)=\lse_0^+(x_1,\ldots, x_d)$ for a $d$-dimensional vector $x$.

\begin{theorem}[$\lse_0^+$ is a Bregman generator]\label{thm:lse0pBF}
Multivariate function $\lse_0^+(x)$ is a Bregman generator.
\end{theorem}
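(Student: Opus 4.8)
The plan is to verify the two defining requirements of Definition~\ref{def:bg} for $F(x) \eqdef \lse_0^+(x) = \log\left(1 + \sum_{i=1}^d e^{x_i}\right)$ on $\bbR^d$: twice continuous differentiability and strict convexity. The first is immediate, since $F$ is the composition of $\log$ with the map $Z(x) \eqdef 1 + \sum_{i=1}^d e^{x_i}$, which is $C^\infty$ and strictly positive everywhere on $\bbR^d$; hence $F \in C^\infty \subset C^2$. The entire content of the statement is therefore the strict convexity, which I would establish by showing that the Hessian $\nabla^2 F(x)$ is symmetric positive definite at every point.

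First I would compute the gradient and Hessian explicitly. Writing $p_i \eqdef e^{x_i}/Z(x)$ for $i = 1,\ldots,d$ and $p_0 \eqdef 1/Z(x)$, so that all $p_i > 0$ and $\sum_{i=0}^d p_i = 1$, a direct differentiation gives $\partial_i F = p_i$ and then
\[
\partial_i \partial_j F = p_i \delta_{ij} - p_i p_j ,
\]
i.e.\ $\nabla^2 F = \diag(p) - p p^\top$ with $p = (p_1,\ldots,p_d)^\top$. This is exactly the Hessian of the log-normalizer of a categorical exponential family, so one recognizes it as a covariance matrix, which already signals positive semidefiniteness.

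The key step is to upgrade semidefiniteness to strict definiteness. For an arbitrary $v \in \bbR^d$ I would write the quadratic form as
\[
v^\top \nabla^2 F(x)\, v = \sum_{i=1}^d p_i v_i^2 - \left(\sum_{i=1}^d p_i v_i\right)^{2},
\]
and interpret it as the variance of the discrete random variable taking value $v_i$ with probability $p_i$ for $i=1,\ldots,d$ and value $0$ with probability $p_0$. By Cauchy--Schwarz, $\left(\sum_{i=1}^d p_i v_i\right)^2 \leq \left(\sum_{i=1}^d p_i\right)\sum_{i=1}^d p_i v_i^2 = (1-p_0)\sum_{i=1}^d p_i v_i^2$, whence $v^\top \nabla^2 F(x)\, v \geq p_0 \sum_{i=1}^d p_i v_i^2$. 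Since $p_0 > 0$ and every $p_i > 0$, this last quantity is strictly positive whenever $v \neq 0$. Hence $\nabla^2 F(x) \succ 0$ at every $x$, giving strict convexity and completing the proof that $\lse_0^+$ is a Bregman generator.

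The main obstacle is precisely this strictness, and it is worth flagging why the pinned zero argument is essential: the slack $1-p_0 = \sum_{i=1}^d p_i < 1$ is produced exactly by the constant term in $Z$, and it is this slack that converts the Cauchy--Schwarz inequality into a \emph{strict} one. For the plain $\lse$ function (no pinned zero) one has $p_0 = 0$, the variance vanishes along the all-ones direction $v = \mathbf{1}$ (reflecting the affine invariance $\lse(x + \lambda \mathbf{1}) = \lse(x) + \lambda$ noted in the text), and strict convexity fails. So the zero coordinate is not a cosmetic device but the mechanism that breaks the degeneracy.
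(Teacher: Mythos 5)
Your proof is correct, and while it starts from the same place as the paper (compute the Hessian, show it is positive definite everywhere), the mechanism you use to get \emph{strict} definiteness is genuinely different. The paper works entry-wise with the Hessian over the common denominator $(1+\sum_k e^{x_k})^2$, observes that the extra $+1$ in the normalizer makes the matrix strictly diagonally dominant, and invokes the Gershgorin circle theorem to place all eigenvalues in the open right half-line. You instead package the Hessian as $\diag(p)-pp^\top$ with the sub-probability vector $p$, read the quadratic form as a variance, and use Cauchy--Schwarz with the slack $\sum_{i=1}^d p_i = 1-p_0 < 1$ to obtain the explicit lower bound $v^\top \nabla^2 F(x)\, v \geq p_0 \sum_i p_i v_i^2 > 0$. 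Both routes isolate the same source of non-degeneracy --- the pinned zero coordinate, i.e.\ the constant $1$ in $Z(x)$ --- but yours buys a quantitative coercivity estimate and a statistical interpretation (covariance of a categorical law with a reserved outcome of mass $p_0$), whereas the paper's is a purely linear-algebraic localization of the spectrum; your cleaner closed form $\diag(p)-pp^\top$ also sidesteps a small typo in the paper's diagonal entry (its $e^{x_i}e^{x_j}$ should read $e^{2x_i}$). Your closing remark that plain $\lse$ degenerates along $v=\mathbf{1}$ matches the paper's footnote on the affine invariance $\lse(x+\lambda\mathbf{1})=\lse(x)+\lambda$ and correctly identifies why the $0$ argument is essential rather than cosmetic.
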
 

Proof is deferred to Appendix~\ref{sec:proof-lse0p}.

\begin{lemma}[Univariate Monte Carlo  Exponential Family Function is a Bregman generator]
Almost surely, the univariate function $\tildeF_\calS(\theta)$ is a Bregman generator.
\end{lemma}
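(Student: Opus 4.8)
The plan is to recognize $\tildeF_\calS$ as the composition of the log-sum-exp generator $\lse_0^+$ with a single affine map, and then push the properties of $\lse_0^+$ obtained in Theorem~\ref{thm:lse0pBF} through that composition. Writing $a=(a_2,\ldots,a_m)^\top$ and $b=(b_2,\ldots,b_m)^\top$, Eq.~\ref{eq:gexpf} exhibits $\tildeF_\calS(\theta)=\lse_0^+(A(\theta))$ for the affine map $A(\theta)=a\theta+b$ sending $\bbR$ into $\bbR^{m-1}$. Since $\lse_0^+$ is twice continuously differentiable and convex (in fact strictly convex) by Theorem~\ref{thm:lse0pBF}, and precomposition with an affine map preserves both smoothness and convexity, $\tildeF_\calS$ is always $C^2$ and convex. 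This disposes of the smoothness requirement of Definition~\ref{def:bg}, and the passage from $\tildeF_\calS^\dagger$ to $\tildeF_\calS$ by deletion of an affine term is legitimate by Property~\ref{prop:bdaffine}; only strict convexity remains, and this is where the almost-sure qualifier is needed.

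I would next pin down exactly when strict convexity fails. A strictly convex function composed with an \emph{injective} affine map stays strictly convex: for $\theta_1\neq\theta_2$, injectivity gives $A(\theta_1)\neq A(\theta_2)$, so the strict Jensen inequality for $\lse_0^+$ along the segment $[A(\theta_1),A(\theta_2)]$ transfers verbatim to $\tildeF_\calS$. The map $A$ is injective precisely when its direction $a$ is nonzero, i.e. when at least one $a_i=t(x_i)-t(x_1)$ is strictly positive. The same condition appears through a direct computation: with $w_i(\theta)=e^{a_i\theta+b_i}/\bigl(1+\sum_{j\geq 2}e^{a_j\theta+b_j}\bigr)$ and $a_1=b_1=0$, one obtains $\tildeF_\calS''(\theta)=\Var_{w(\theta)}[a]\geq 0$, with equality iff all the $a_i$ coincide, that is iff $t(x_1)=\cdots=t(x_m)$. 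Hence $\tildeF_\calS$ is strictly convex unless every sample point carries the same value of the sufficient statistic.

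It then remains to show that this degenerate sample is null. As $x_1,\ldots,x_m$ are drawn iid from $q$, the probability that all their $t$-values coincide equals $\sum_c \bigl(q\{x:t(x)=c\}\bigr)^m$. Linear independence of $t(x)$ and the constant $1$ forbids $t$ from being $q$-almost-everywhere constant, and under the exponential-family counterpart of the non-singularity hypothesis used in the mixture case (namely that the level sets of $t$ carry no $q$-mass, equivalently that the law of $t(X)$ under $q$ is non-atomic) each term vanishes, so the degenerate event has probability zero. Consequently $a\neq 0$ almost surely, $\tildeF_\calS$ is almost surely strictly convex, and with the smoothness already established it is almost surely a Bregman generator. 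The step I expect to be the main obstacle is exactly this last one: linear independence of $t$ and $1$ by itself only rules out a constant $t$, whereas the clean ``almost surely'' statement genuinely requires the pushforward of $q$ under $t$ to be atomless, so I would make this non-atomicity the operative hypothesis and state it explicitly rather than leave it implicit.
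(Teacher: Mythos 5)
Your proof is correct and reaches the same conclusion, but by a genuinely different route from the paper's. The paper does not invoke Theorem~\ref{thm:lse0pBF} here at all: it computes $\tildeF_\calS''(\theta)=\Num/\Den$ directly and massages $\Num$ by hand into a lower bound of the form $\sum_{i<j}(a_i-a_j)^2c_ic_j$, concluding positivity when the $a_i$'s are not all equal. Your primary argument instead factors $\tildeF_\calS=\lse_0^+\circ A$ through the affine map $A(\theta)=a\theta+b$ and observes that strict convexity survives affine precomposition exactly when $A$ is injective, i.e.\ $a\neq 0$; your secondary identity $\tildeF_\calS''(\theta)=\Var_{w(\theta)}[a]$ (with the extra slot $a_1=0$ included) is a cleaner packaging of the paper's $\Num$ computation and gives the exact vanishing condition rather than a one-sided bound --- indeed the paper's closing claim (``strictly positive if at least two $a_i$'s are distinct'') is stated slightly loosely, whereas your condition (not all $t(x_i)$ coincide) is the precise one. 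Both arguments isolate the same bad event and both need the paper's Assumption AEF1D (non-atomicity of the law of $t(X)$ under $q$) to make it null; you are in fact more explicit than the paper on this point, both in computing the probability $\sum_c q(\{t=c\})^m$ of the degenerate sample and in flagging that linear independence of $t$ and $1$ alone does not suffice --- which is exactly why the paper appends AEF1D only after the proof. The composition viewpoint buys conceptual clarity (it explains why Theorem~\ref{thm:lse0pBF} cannot be cited blindly: the affine map $\bbR\to\bbR^{m-1}$ may collapse), while the paper's direct computation is self-contained and also yields the explicit formula for $\tildeF_\calS''$ recorded in its boxed summary.
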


\begin{proof}
The first derivative is:
\begin{equation}
\eta= \tildeF_\calS'(\theta)=\frac{ \sum_{i=2}^m a_i\exp(a_i\theta+b_i)}{1+ \sum_{i=2}^m \exp(a_i\theta+b_i)} \geq 0,
\end{equation}
and is strictly greater than $0$ when there exists at least two elements with distinct sufficient statistics (i.e., $t(x_i)\not= t(x_j)$) so that at least one $a_i>0$.

The second derivative is:
\begin{eqnarray}
\tildeF_\calS''(\theta)=\frac{ \left(\sum_{i=2}^m a_i^2\exp(a_i\theta+b_i)\right)  \left(1+ \sum_{i=2}^m \exp(a_i\theta+b_i)\right)
- \left(\sum_{i=2}^m a_i\exp(a_i\theta+b_i)\right)^2
 }{(1+ \sum_{i=2}^m \exp(a_i\theta+b_i))^2}  \defeq \frac{\Num}{\Den} 
\end{eqnarray}

For each value of $\theta\in\Theta$, we shall prove that $\tildeF_\calS''(\theta)>0$.
Let $c_i=c_i(\theta)=\exp(a_i\theta+b_i)>0$ for short ($\theta$ being fixed, we omit it in the $c_i$ notation in the  calculus derivation).
Consider the numerator $\Num$ since the denominator $\Den$ is a non-zero square, hence strictly positive.
We have:
\begin{eqnarray}
\Num &>& \left(\sum_{i=2}^m a_i^2 c_i\right)  \left(\sum_{i=2}^m c_i\right) - \left(\sum_{i=2}^m a_i c_i\right)^2,\\
\Num &>& \sum_{ij} a_i^2c_ic_j - \sum_i a_i^2c_i^2 -2\sum_{i<j} a_ia_jc_ic_j,\\
\Num &>& \sum_{i=j} a_i^2c_i^2 + \sum_{i\not =j} a_i^2c_ic_j - \sum_i a_i^2c_i^2 -2\sum_{i<j} a_ia_jc_ic_j,\\
\Num &>& \sum_{i<j} a_i^2c_ic_j+\sum_{i>j} a_i^2c_ic_j-2\sum_{i<j} a_ia_jc_ic_j,\\
\Num &>& \sum_{i<j} a_i^2c_ic_j+\sum_{i<j} a_j^2c_ic_j -2\sum_{i<j} a_ia_jc_ic_j,\\
\Num  &>&  \sum_{i<j}   (a_i^2+a_j^2-2a_ia_j) c_ic_j,\\
\Num &>& \sum_{i<j} (a_i-a_j)^2 c_ic_j >0.
\end{eqnarray}

Therefore the numerator is strictly positive if at least two $a_i$'s are distinct.
\end{proof}

Thus we add the following assumption:

\begin{assumption}[AEF1D]
For all $y\in\dom(t)$, $E_q[1_{t(x)=y}]=0$.
\end{assumption}

\begin{figure}
\centering
\includegraphics[width=0.7\columnwidth]{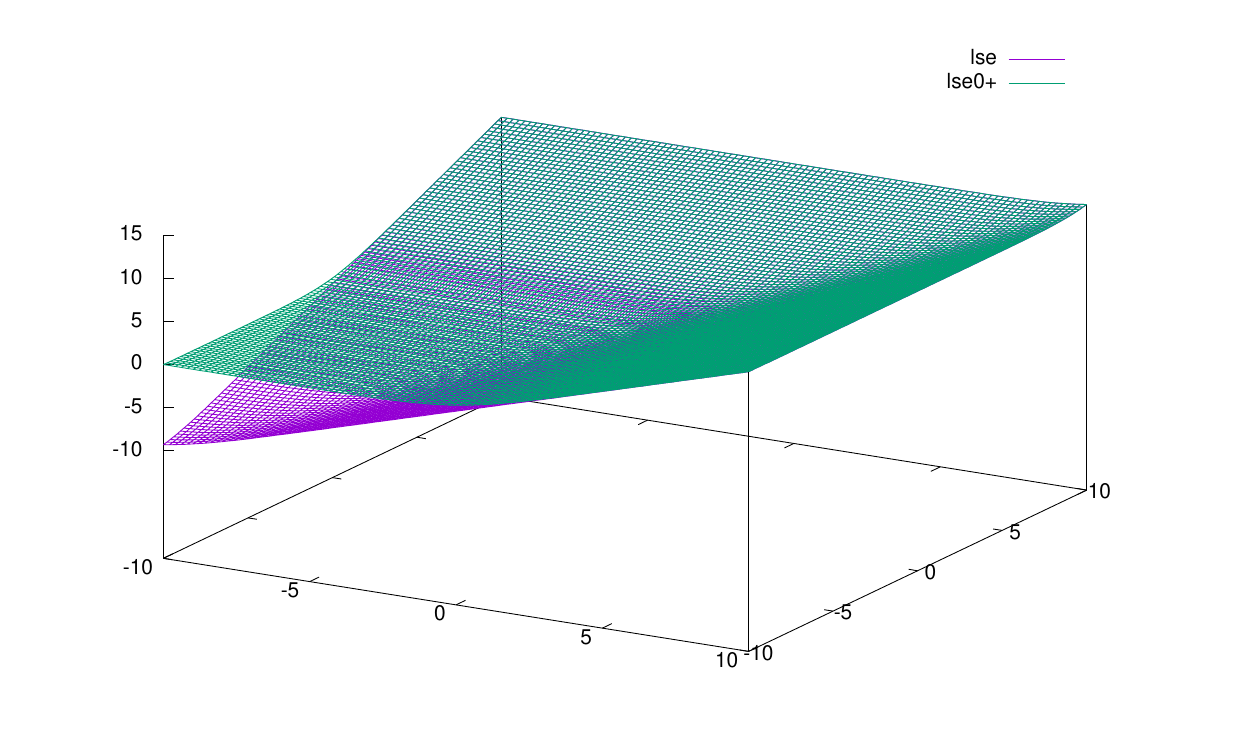}

\caption{Graph plots of the $\lse$ and $\lse_0^+$ functions: The $\lse$ function (violet) is only convex while the $\lse_0^+$ function (green) is always guaranteed to be strictly convex.\label{fig:lselse0p}}
\end{figure}

To recap, the MCEFF of the MCIG of uni-order family has the following characteristics:
\vskip 0.5cm
\noindent \fbox{\vtop{\underline{Monte Carlo Mixture Family Generator 1D}: 
\begin{eqnarray} 
\tildeF_\calS(\theta) &=& \lse_0^+(a_2\theta+b_2, \ldots, a_m\theta+b_m),\\
&& a_i=t(x_i)-t(x_1),\\
&& b_i=k(x_i)-k(x_1)-\log q(x_i)+\log q(x_1),\\
\tildeF_\calS'(\theta) &=&\frac{ \sum_{i=2}^m a_i\exp(a_i\theta+b_i)}{1+ \sum_{i=2}^m \exp(a_i\theta+b_i)} \defeq \eta,\\
\tildeF_\calS''(\theta) &=&\frac{ \left(\sum_{i=2}^m a_i^2\exp(a_i\theta+b_i)\right)  \left(1+ \sum_{i=2}^m \exp(a_i\theta+b_i)\right)
- \left(\sum_{i=2}^m a_i\exp(a_i\theta+b_i)\right)^2
 }{(1+ \sum_{i=2}^m \exp(a_i\theta+b_i))^2} 
\end{eqnarray}
}}
\vskip 0.5cm

\subsection{The general $D$-order case}\label{sec:MCIGEFDD}
The difference of sufficient statistics $a_i=t(x_i)-t(x_1)$ is now a vector of dimension $D$:
\begin{equation}
a_i= \left[\begin{array}{l} a_i^1\\ \vdots\\ a_i^D\end{array} \right].
\end{equation}
We replace the scalar multiplication $a_i\theta$ by an inner product $\inner{a_i}{\theta}$ in Eq.~\ref{eq:gexpf}, and let 
$c_i(\theta)=\exp(\inner{a_i}{\theta}+b_i)$ with $b_i=k(x_i)-k(x_1)-\log q(x_i)+\log q(x_1)$.
Then the Monte Carlo Exponential Family Function (MCEFF) writes concisely as:

\begin{eqnarray}
\tildeF_\calS(\theta) &=& \log \left(1+ \sum_{l=2}^m c_l(\theta)\right),\\
&\eqdef & \lse_0^+(c_2(\theta), \ldots, c_m(\theta)),\label{eq:mnvgexpf}
\end{eqnarray}

\begin{theorem}[Monte Carlo  Exponential Family Function is a Bregman Generator]\label{thm:MCEF}
Almost surely, the function $\tildeF_\calS(\theta)$ is a proper Bregman generator.
\end{theorem}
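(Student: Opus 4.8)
The plan is to recognize $\tildeF_\calS$ as the composition of the strictly convex generator $\lse_0^+$ with an affine map, and then to reduce the whole statement to a rank condition that holds almost surely. Writing $A$ for the $(m-1)\times D$ matrix whose $l$-th row is $a_l^\top=(t(x_l)-t(x_1))^\top$ and $b=(b_2,\ldots,b_m)^\top$, we have $\tildeF_\calS(\theta)=\lse_0^+(L(\theta))$ with $L(\theta)=A\theta+b$ an affine map from $\bbR^D$ to $\bbR^{m-1}$. Since $L$ is smooth and, by Theorem~\ref{thm:lse0pBF}, $\lse_0^+$ is $C^2$ and strictly convex on $\bbR^{m-1}$, the composite is automatically $C^2$ (indeed $C^\infty$) and convex on all of $\bbR^D$; note that, the sum $1+\sum_{l=2}^m c_l(\theta)$ being finite and positive for every $\theta$, the domain of $\tildeF_\calS$ is the whole of $\bbR^D$, with no integrability constraint. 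Thus $C^2$-smoothness and convexity hold deterministically, and only strict convexity remains.

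Next I would record the elementary fact that, for a strictly convex $f$ and an affine map $L(\theta)=A\theta+b$, the composite $f\circ L$ is strictly convex if and only if the linear part $A$ is injective, i.e. $\mathrm{rank}(A)=D$: along any segment on which $A\theta$ is constant the composite is affine, whereas if $A(\theta-\theta')\neq 0$ then strict convexity of $f$ at $L(\theta)\neq L(\theta')$ forces a strict midpoint inequality. Equivalently, the Hessian $\nabla^2\tildeF_\calS(\theta)=A^\top\nabla^2\lse_0^+(L(\theta))\,A$ is positive definite exactly when $A$ has full column rank, since $\nabla^2\lse_0^+\succ 0$. Hence the theorem reduces to showing that the difference vectors $a_2,\ldots,a_m$ span $\bbR^D$ almost surely, which in particular forces $m\geq D+1$.

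The heart of the argument, and the one genuinely probabilistic step, is therefore to prove $\mathrm{rank}(A)=D$ almost surely. Equivalently, the sampled points $T_l\eqdef t(x_l)\in\bbR^D$ must affinely span $\bbR^D$, i.e. not all lie in a common affine hyperplane $\{y:\inner{u}{y}=c\}$ with $u\neq 0$. The obstacle is that there are uncountably many candidate hyperplanes, so one cannot union-bound a measure-zero family directly; I would circumvent this by a conditioning/induction argument on the iid images $T_l$. Under the multivariate analogue of Assumption~AEF1D, namely that the law $t_\# q$ of $t(x)$ charges no affine hyperplane, $E_q[1_{\inner{u}{t(x)}=c}]=0$ for every $(u,c)$ with $u\neq 0$ (the natural strengthening of the non-atomicity condition, compatible with the already-assumed minimality, i.e. linear independence of $1,t_1,\ldots,t_D$), the induction runs as follows: if $T_1,\ldots,T_k$ with $k\leq D$ are affinely independent, their affine hull is a proper affine subspace contained in some hyperplane $H$ with $t_\# q(H)=0$, so the next draw $T_{k+1}$ avoids this hull almost surely and affine independence propagates. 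Iterating to $k=D+1$ shows $T_1,\ldots,T_{D+1}$ are affinely independent almost surely, whence $a_2,\ldots,a_{D+1}$ span $\bbR^D$ and $\mathrm{rank}(A)=D$.

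Combining the two parts: almost surely $\mathrm{rank}(A)=D$, so $\tildeF_\calS=\lse_0^+\circ L$ is strictly convex, and together with the deterministic $C^2$-smoothness this makes $\tildeF_\calS$ a Bregman generator in the sense of Definition~\ref{def:bg}. The main difficulty I anticipate is exactly the hyperplane-avoidance step; the reduction to a rank condition via composition and the smoothness are routine once Theorem~\ref{thm:lse0pBF} is granted, and the remaining care is simply to state the right multivariate non-degeneracy hypothesis, the one that makes the induction valid for every fixed $m\geq D+1$ rather than only asymptotically.
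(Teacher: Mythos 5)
Your proof is correct, and it takes a genuinely different route from the paper's. The paper computes $\nabla\tildeF_\calS$ and $\nabla^2\tildeF_\calS$ from scratch (Eqs.~\ref{eq:gradef}--\ref{eq:hessef}), splits the numerator $\Num$ of the Hessian into two positive semi-definite pieces, rewrites the second piece as $\sum_{k<l}(a_k-a_l)(a_k-a_l)^\top c_k c_l$, and identifies the degenerate event as the existence of $y\neq 0$ with $\inner{y}{t(x_k)}=\inner{y}{t(x_l)}$ for all $k<l$ --- i.e.\ exactly your condition that the sampled sufficient statistics lie on a common affine hyperplane; Assumption~AEF is then invoked to rule this out. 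Your factorization $\tildeF_\calS=\lse_0^+\circ L$ with $L(\theta)=A\theta+b$ gets the same Hessian as $A^\top\nabla^2\lse_0^+(L(\theta))A$ in one line by reusing Theorem~\ref{thm:lse0pBF}, which the paper proves but then does not exploit here; this buys modularity and avoids the somewhat error-prone index manipulations in Eqs.~\ref{eq:p1}--\ref{eq:symdefcond3} (where a spurious $c_i$ appears inside the sum over coordinates). More substantively, your conditioning/induction argument showing that $D+1$ iid draws are almost surely affinely independent under the hyperplane-non-charging hypothesis actually closes the probabilistic step: the paper stops at the characterization of the degenerate event and states Assumption~AEF afterward without addressing the fact that the exceptional hyperplane depends on the sample (your ``uncountably many candidate hyperplanes'' point), so your version is the more complete one. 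You also make explicit the requirement $m\geq D+1$, which the paper leaves implicit for the exponential-family case (it is stated only for the mixture case in Theorem~\ref{thm:MCMF}).
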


\begin{proof}
We have the gradient of first-order partial derivatives:
\begin{equation}\label{eq:gradef}
\eta_i = \partial_i \tildeF_\calS(\theta)  = \frac{ \sum_{l=2}^m a_l^i  c_l(\theta)}{1+ \sum_{l=2}^m c_l(\theta)},
\end{equation}
and the Hessian matrix of second-order partial derivatives:
\begin{equation}\label{eq:hessef}
\partial_i \partial_j\tildeF_\calS(\theta)  = \frac{
(\sum_{l=2}^m a_l^i a_l^j c_l(\theta))(1+ \sum_{l=2}^m c_l(\theta))-
 (\sum_{l=2}^m a_l^i c_l(\theta)) ( \sum_{l=2}^m a_l^j c_l(\theta))
}{(1+ \sum_{l=2}^m c_l(\theta))^2}  \defeq \frac{\Num}{\Den}.
\end{equation}

Let us prove that the Hessian matrix $\nabla^2
\tildeF_\calS(\theta)=(\partial_i
\partial_j\tildeF_\calS(\theta))_{ij}$  is always symmetric positive
semi-definite, and symmetric positive definite almost surely.

Indeed, we have:
\begin{equation}
\Num = \underbrace{\sum_k a_k^ia_k^jc_k}_{\eqdef D} + \underbrace{\sum_{k,l} a_k^ia_k^jc_kc_l-\sum_{k,l}a_k^ic_ka_l^jc_l}_{\eqdef E}.
\end{equation}

Let us rewrite $D$ as $D=C A^\top A$ with $C=\diag(c_1,\ldots, c_D)$. It follows that matrix $D$ is symmetric positive definite.
Let us prove that matrix $E$ is also SPD:

\begin{eqnarray}
E &\stackrel{\star}{=}& \sum_{k<l} a_k^ia_k^jc_k c_l + \sum_{l<k} a_k^i z_k^j c_k c_l - \sum_{k<l} a_k^i a_l^j c_k c_l - \sum_{l<k} a_k^i a_l^j c_k c_l,\label{eq:p1}\\
&\stackrel{\star\star}{=}& \sum_{k<l}\left(a_k^i a_k^j+a_l^i a_l^j-a_k^ia_l^j-a_l^ia_k^j\right)c_kc_l,\\
&=&  \sum_{k<l} (a_k^i-a_l^i) (a_k^j-a_l^j) c_kc_l.\label{eq:p2}
\end{eqnarray}

$\star$: The terms $l=k$ vanish\\
$\star\star$: After a change of variable $l\leftrightarrow k$ in the second and fourth sums of Eq.~\ref{eq:p1}.\\

Thus Eq.~\ref{eq:p2} can be rewritten as $(a_k-a_l)(a_k-a_l)^\top c_k c_l$ 
where $a_k=\left[ \begin{array}{c}a_k^1\\ \vdots\\ a_k^D\end{array}\right]$.
It follows that $E$ is a positively weighted sum of rank-1 symmetric
positive semi-definite matrices, and is therefore symmetric positive
semi-definite.

We want $y^TEy > 0$ for all $y \neq 0 \in \bbR^D$. Suppose that there exists
$y \neq 0 \in \bbR^D$ such that $y^TEy = 0$. Noting that $a^i_k - a_l^i
= t_i(x_k) - t_i(x_l)$, we can write this as
\begin{equation}\label{eq:symdefcondition}
\sum_{k<l} \left( \sum_{i}y_ic_i(t_i(x_k) - t_i(x_l)) \sum_j y_jc_j(t_j(x_k)
  - t_j(x_l)) \right) = 0,
\end{equation}
which implies
\begin{equation}
  \label{eq:symdefcond2}
  \sum_{i}y_ic_i\left(t_i(x_k) - t_i(x_l)\right) \sum_j y_jc_j\left(t_j(x_k)
  - t_j(x_l)\right) = 0, \quad \forall k < l, 
\end{equation}
since each of these terms is non negative.
In particular, we have the existence of a $y \neq 0 \in \bbR^D$ such that
\begin{equation}\label{eq:symdefcond3}
 \sum_i y_i t_i(x_k) = \sum_i y_i t_i(x_l), \quad \forall y \neq 0, \quad
  \forall k < l.
\end{equation}

\end{proof}

To get  almost surely  a Monte Carlo Bregman generator, we
introduce the following assumption:
\begin{assumption}[AEF]
  The sufficient statistics $\left(t_i\right)$ verify that for all
  $\lambda \neq 0$ and all $y \in dom(\sum_i \lambda_i t_i)$:
  \[ E_q \left[ 1_{\sum_i \lambda_i t_i(x) = y} \right] = 0. \]
\end{assumption}

\section{Application to clustering}\label{sec:MCIGClust}

In this section, we demonstrate the practical use of MCIG to cluster a set of mixtures in~\S\ref{sec:clustmix}, and consider in ~\S\ref{sec:parallel} parallel calculations/aggregations of Monte Carlo Exponential/Mixture Functions.

\subsection{Clustering mixtures on the mixture family manifold}\label{sec:clustmix}
Consider clustering a set of $n$ mixtures $m(x;\eta_1),\ldots, m(x;\eta_n)$ of the mixture family manifold.
Prior work considered clustering the mixture components (e.g., Gaussian components) to simplify mixtures by using the Bregman $k$-means~\cite{GaussianClustering-2007,GaussianClustering-2009}. This can be interpreted as a Gaussian/component quantization procedure.

Here, we address the different problem of clustering the mixtures themselves, not their components.

Since $\KL(m(x;\eta_i):m(x;\eta_j))=B_G(\eta_i:\eta_j)$ for $G(\eta)=-h(m(x;\eta))$ (Shannon information), we may approximate the KL divergence from the MC Bregman Divergence (MCBD) $\tildeG_\calS$ as follows:
\begin{eqnarray}
\KL(m(x;\eta_i):m(x;\eta_j))&=& B_G(\eta_i:\eta_j),\\
&\simeq&  B_{\tildeG_\calS}(\eta_i:\eta_j). 
\end{eqnarray}

One advantage of using a MCIG is that all divergence computations $B_{\tildeG_\calS}$ performed during the execution of a Bregman algorithm are consistent by reusing the same variates of $\calS$.
In particular, this also guarantees to always have nonnegative estimated KL divergences.

The traditional way to MC estimate the KL divergence is to consider the MC stochastic integration of the extended Kullback-Leibler divergence~\cite{Bregman-2005}:
\begin{equation}
\widehat{\eKL}_m(p:q) \eqdef \frac{1}{m} \sum_{i=1}^m \left(  \log \frac{p(x_i)}{q(x_i)}  + \frac{q(x_i)}{p(x_i)} -  1 \right),
\end{equation}
for $x_1,\ldots,x_m \sim p(x)$.
Indeed, if we just used the MC KL estimator:
\begin{equation}
\widehat{\KL}_m(p:q) \eqdef \frac{1}{m} \sum_{i=1}^m    \log \frac{p(x_i)}{q(x_i)},
\end{equation}
we may endup with negative values to our estimated KL, depending on the sample variates!
This never happens for $\eKL$ that is a statistical divergence for the scalar divergence $\ekl(p:q)=p\log \frac{p}{q}+q-p\geq 0$.

Bregman $k$-means~\cite{Bregman-2005,Fast1DClustering-2017} can be applied using either the sided o ther symmetrized centroid~\cite{SymBregman-2009}:
The right-sided centroid is always the center of mass of the parameters.
The left-sided centroid requires to compute $F'(\theta)$ and its reciprocal inverse function $(F'(\theta))^{-1}$ (wlog, assuming $D=1$ for simplicity\footnote{Otherwise, we need to consider monotone operator theory~\cite{SymbolicMonotoneOp-2017} to invert $\nabla F(\theta)$.}).
Although $F'(\theta)$ is available in closed form (and define the dual parameter $\theta$):
\begin{equation}
\tildeG_\calS'(\eta) = \frac{1}{m} \sum_{i=1}^m  \frac{1}{q(x_i)}  \left(p_1(x_i)-p_0(x)\right) \left(1+\log m(x;\eta)\right) =\theta,
\end{equation}
the dual parameter of $(\calM,G)$ cannot be written as a simple function $\eta={F^*}'(\eta)$.
Notice that $\theta=\tildeG_\calS'(\eta)$ is an increasing function of $\eta$ and that inverting operation can be performed numerically.
Indeed, we can compute $\eta=(\tildeG_\calS')^{-1}(\theta)=\tildeG_\calS^*(\theta)$ using a numerical scheme (e.g., bisection search).

The symmetric Jeffreys divergence is:
\begin{eqnarray}
J(m(x;\eta_i):m(x;\eta_j)) &=& \KL(m(x;\eta_i):m(x;\eta_j))+\KL(m(x;\eta_j):m(x;\eta_i)),\\
&=& B_G(\eta_i:\eta_j)+B_G(\eta_j:\eta_i),\label{eq:JBreg}\\
&=& B_G(\eta_i:\eta_j)+B_{G^*}(\theta_i:\theta_j),\\
&=&\inner{\Delta\theta_{ij}}{\Delta\eta_{ij}},
\end{eqnarray}
where $\Delta\theta_{ij}=\theta_i-\theta_j$ and $\Delta\eta_{ij}=\eta_i-\eta_j$.

We may approximate the $J$ divergence by considering the Monte Carlo Bregman generator in Eq.~\ref{eq:JBreg}:
\begin{equation}
J(m(x;\eta_i):m(x;\eta_j)) \simeq B_{\tildeG_\calS}(\eta_i:\eta_j)+B_{\tildeG_\calS}(\eta_j:\eta_i).
\end{equation}
 
We can then apply the technique of mixed Bregman clustering~\cite{MixedBregmanClustering-2008} that considers two centers per cluster.
Moreover a fast probabilistic initialization, called {\em mixed Bregman $k$-means++}~\cite{MixedBregmanClustering-2008}, allows one to guarantee a good initialization with high probability (without computing centroids but requiring to compute divergences).

Another technique to bypass the computation of the gradient $\nabla\tildeG_\calS$ in the BD consists in taking the scaled skew $\alpha$-Jensen divergence~\cite{BR-2011}  for an infinitesimal value of $\alpha$.
Indeed, we have the $\alpha$-Jensen divergence defined by:
\begin{equation}\label{eq:jf}
J_F^\alpha(p:q)=(1-\alpha)F(p)+\alpha F(q)-F((1-\alpha)p+\alpha q), 
\end{equation}
and asymptotically this skewed Jensen divergences yield the sided Bregman divergences~\cite{BR-2011} as follows:
\begin{eqnarray}
\lim_{\alpha\rightarrow 0^+} \frac{J_F^\alpha(p:q)}{\alpha}&=& B_F(q:p),\\
\lim_{\alpha\rightarrow 1^-} \frac{ J_F^\alpha(p:q)}{1-\alpha}&=& B_F(p:q),
\end{eqnarray}

Thus we have for small values of $\alpha>0$ (say, $\alpha=0.001$):
\begin{eqnarray}
J(m(x;\eta_i):m(x;\eta_j)) &=& B_G(\eta_i:\eta_j)+B_G(\eta_j:\eta_i),\\
&\simeq & \frac{1}{\alpha} J_{\tildeG_\calS}^\alpha(\eta_i:\eta_j) +  \frac{1}{1-\alpha} J_{\tildeG_\calS}^{1-\alpha}(\eta_i:\eta_j).\label{eq:symsJ}
\end{eqnarray}

The last equation Eq.\ref{eq:symsJ} is the symmetrized skew Jensen divergence studied in~\cite{nielsen-2010}.

Figure~\ref{fig:clusteringgmm} plots the result of a $2$-cluster clustering wrt the Jeffreys' divergence for a set of $n=8$ mixtures.

 %

\begin{figure}%
\centering
 
\includegraphics[width=0.85\columnwidth]{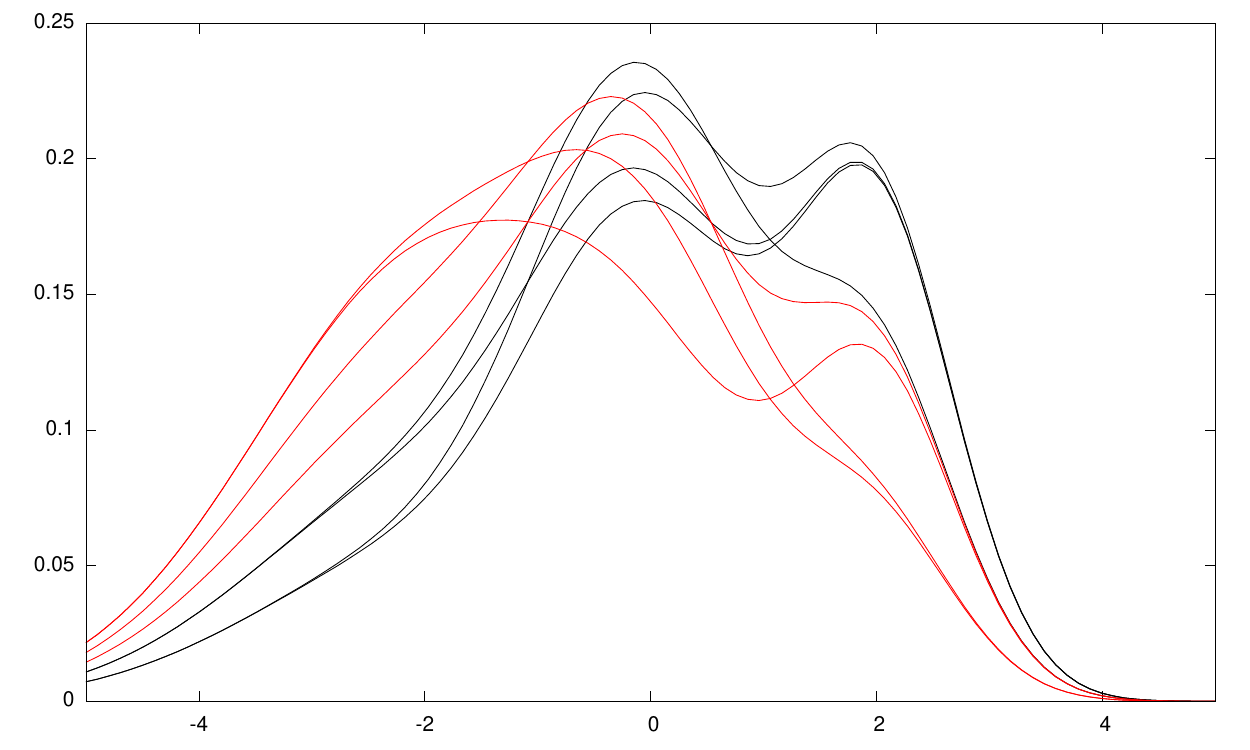}%

\caption{Clustering a set of $n=8$ statistical mixtures of order $D=2$ with $K=2$ clusters: 
Each mixture is represented by a 2D point on the mixture family manifold.
The Kullback-Leibler divergence is equivalent to an integral-based Bregman divergence that is computationally untractable: The Bregman generator is stochastically approximated by Monte Carlo sampling.}%
\label{fig:clusteringgmm}%
\end{figure}

\subsection{Parallelizing information geometry}\label{sec:parallel} 
We can distribute the Monte Carlo information geometry either on a multicore machine with $l$ cores with shared memory or on a cluster of $l$ machines with distributed memory, or even consider hybrid architectures.

Let $(M,\tildeF_{\calS_1}), \ldots, (M,\tildeF_{\calS_l})$ be a set of $l$ information-geometric manifolds obtained from iid sample sets $\calS_1,\ldots,\calS_l$. 
Let $\oplus_{i=1}^s \calS_i$ be a partition of $\calS$.

\subsubsection{Multicore architectures}

On a multicore architecture, we may evaluate the mixture family Bregman divergence $B_{\tildeG_\calS}(\eta:\eta')$ by evaluating 
$B_{\tildeG_{\calS_i}}(\theta:\theta')$, and using the compositionality rule of Bregman generators in BDs   (Property~\ref{prop:bdcomprule}) with:
\begin{equation}\label{eq:distmc}
\tildeG_{\calS}(\theta)= \sum_{i=1}^l \frac{|\calS_i|}{|\calS|} \tildeG_{\calS_i}(\eta).
\end{equation}
That is, $\tildeG_{\calS}(\eta)$ is the {\em arithmetic weighted mean} of the mixture sub-generators.

For the exponential families, recall that we have:
\begin{equation}\label{eq:}
\tildeF_{\calS}(\theta)=\log \left( \sum_{i=1}^s \frac{|\calS_i|}{|\calS|} \exp(\tildeF_{\calS_i}) \right).
\end{equation}
That is, $\tildeF_{\calS}(\theta)$ can be interpreted as an {\em exponential mean} (quasi-arithmetic mean, called $f$-mean~\cite{BR-2011} for the monotonically increasing function $f(x)=\exp(x)$) of the sub-generators.
Thus we can perform the computation of the MC Bregman generators on multi-core architectures easily with a MapReduce strategy~\cite{hpc-2016}.

\begin{fact}[MapReduce evaluation of MC Bregman generators]
The MCMF or MCEF functions can be computed in parallel using a quasi-arithmetic mean MapReduce operation.
\end{fact}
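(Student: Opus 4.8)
The plan is to reduce the \textbf{Fact} to verifying two blockwise decomposition identities — one for the mixture generator $\tildeG_\calS$ and one for the exponential-family generator — and then to recognize each identity as a quasi-arithmetic (Kolmogorov--Nagumo) mean, which is precisely the aggregation primitive of a MapReduce pass. First I would fix a partition $\calS = \oplus_{j=1}^l \calS_j$ into $l$ disjoint blocks, write $m_j \eqdef |\calS_j|$ and $m \eqdef |\calS| = \sum_{j=1}^l m_j$, and assign block weights $w_j \eqdef m_j/m$ so that $\sum_j w_j = 1$. The map step assigns block $\calS_j$ to core or node $j$, which computes the local sub-generator; the reduce step aggregates these sub-generators, and the whole content of the proof is to show that this aggregation is a quasi-arithmetic mean.

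For the mixture case I would start from the definition $\tildeG_\calS(\eta) = \frac{1}{m}\sum_{x\in\calS} \frac{1}{q(x)}\, m(x;\eta)\log m(x;\eta)$ and split the single sum over $\calS$ along the partition. Since summation is linear, factoring $\frac{1}{m_j}$ out of each inner block turns it into the sub-generator $\tildeG_{\calS_j}(\eta)$, giving Eq.~\ref{eq:distmc}, namely $\tildeG_\calS(\eta) = \sum_{j=1}^l w_j\, \tildeG_{\calS_j}(\eta)$. This is exactly the quasi-arithmetic mean for the identity $f(x)=x$, i.e. the weighted arithmetic average, so the reduce step here is a plain weighted sum.

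For the exponential-family case I would work with the direct estimator $\tildeF_\calS^\dagger(\theta)$ of Eq.~\ref{eq:fdagger} rather than the affine-reduced $\tildeF_\calS(\theta)$; by Property~\ref{prop:bdaffine} the two induce the same Bregman divergence, so nothing is lost for divergence computations. Exponentiating cancels the outer logarithm, after which $\exp(\tildeF_\calS^\dagger(\theta)) = \frac{1}{m}\sum_{x\in\calS}\frac{1}{q(x)}\exp(\inner{t(x)}{\theta}+k(x))$ is again a single separable sum; splitting it along the partition and factoring $\frac{1}{m_j}$ out of each block yields $\exp(\tildeF_\calS^\dagger(\theta)) = \sum_{j=1}^l w_j\,\exp(\tildeF_{\calS_j}^\dagger(\theta))$. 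Taking logarithms recovers the exponential-mean identity $\tildeF_\calS^\dagger(\theta) = \log\big(\sum_{j=1}^l w_j\,\exp(\tildeF_{\calS_j}^\dagger(\theta))\big)$, which is the quasi-arithmetic mean $f^{-1}(\sum_j w_j\, f(\tildeF_{\calS_j}^\dagger))$ for the strictly increasing $f=\exp$, $f^{-1}=\log$.

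I do not expect a genuine analytic obstacle: both identities follow from the linearity of summation over a partition, and the only conceptual step is recognizing the outer-$\log$ / inner-sum shape of $\tildeF^\dagger$ as the $\exp$-mean of its blockwise pieces. The single point worth stating carefully is that the mixture generator aggregates \emph{additively} whereas the exponential generator aggregates \emph{multiplicatively before the log}; both are nonetheless instances of the one template $M_f(y_1,\ldots,y_l;w) = f^{-1}\big(\sum_{j=1}^l w_j\, f(y_j)\big)$, with $f=\mathrm{id}$ for mixtures and $f=\exp$ for exponential families. Hence a uniform MapReduce reducer, parametrized only by the choice of $f$, computes $\tildeG_\calS$ and $\tildeF_\calS^\dagger$ from their local pieces, which establishes the claim.
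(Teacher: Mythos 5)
Your proposal is correct and follows essentially the same route as the paper, which simply states the two aggregation identities (Eq.~\ref{eq:distmc} for the arithmetic mean of mixture sub-generators and the exponential $f$-mean for the exponential-family case) and reads off the Fact from them. Your one refinement — working with $\tildeF_\calS^\dagger$ rather than the affine-reduced $\tildeF_\calS$ so that the blockwise $\exp$-mean identity holds exactly, invoking Property~\ref{prop:bdaffine} to justify the substitution — is a legitimate point of care that the paper glosses over, but it does not change the argument.
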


\subsubsection{Cluster architectures}

Since the MC Bregman generators can be interpreted as  random variables $\tildeG_{m}(\theta)$ and $\tildeF_{m}(\theta)$, 
we may obtain robust estimate~\cite{Pelletier-2005} by carrying the calculations on $l$ MCIGs on a cluster architecture, and then integrate those $l$ geometries.

Given a sequence of matching parameters $\theta_1\in (M,\tildeF_{s_1}),\ldots, \theta_l\in (M,\tildeF_{s_l})$, we aggregate these parameters by doing the {\em KL-averaging} method~\cite{KLaveraging-2014}. This amounts to compute a sided centroid for $\theta$.

\section{Conclusion and perspectives}\label{sec:concl}


In this work, we have proposed a new type of {\em randomized
  information-geometric structure} to cope with computationally
untractable information-geometric structures (types 4 and 5 in the
classification of Table~\ref{tab:level}): Namely, the Monte Carlo Information Geometry (MCIG). MCIG  performs stochastic integration of the ideal but computationally intractable definite integral-based Bregman generator (e.g. Eq~\ref{eq:negent} for mixture family) for mixture family and Eq~\ref{eq:cf} for exponential family).
 We proved that the MC Bregman generators for the mixture family and the exponential family are almost surely strictly convex and differentiable (Theorem~\ref{thm:MCMF} and Theorem~\ref{thm:MCEF}, respectively), and  therefore yields a computational tractable information-geometric structure (type 2 in the classification of Table~\ref{tab:level}).
Thus we can get a series of {\em consistent} and {\em computationally-friendly} information-geometric structures that tend asymptotically to the untractable ideal information geometry.
We have demonstrated the usefulness of our technique for a basic Bregman $k$-means clustering technique: Clustering statistical mixtures on a mixture family manifold.
Although the MCIG structures are computationally convenient, we do not have in closed-form $\nabla F^*$ (nor $F^*$) because our Bregman generators are the sum of basic generators whose gradients is the sum of elementary gradients that cannot be inverted easily. This step requires a numerical or symbolic technique~\cite{SymbolicMonotoneOp-2017}.

We note that in the recent work of~\cite{matsuzoe-2017},  Matsuzoe et al. defined a sequence of statistical manifolds relying on a sequential structure of escort expectations for non-exponential type statistical models. 

In a forthcoming work~\cite{MCIG-general-2018}, we address the more general case of the Monte Carlo information-geometric structure of a generic statistical manifold of a parametric family of distributions induced by an arbitrary statistical $f$-divergence.
That is, we consider a statistical divergence $D(p(x;\theta_1):p(x;\theta_2))=\int_{x\in\calX} D_1(p(x;\theta_1):p(x;\theta_2))\dmu(x)$ (where $D_1(\cdot:\cdot)$ is a univariate divergence), and study the information-geometric structure 
$(\calM,g_{\tildeD},\nabla_\tildeD,\nabla_\tildeD^*)$ induced by the Monte Carlo stochastic approximation of the  divergence with $m$ iid samples $x_i$'s: $\tildeD(p(x;\theta_1):p(x;\theta_2)) \eqdef \frac{1}{m} \sum_{i=1}^m \frac{1}{q(x_i)} D_1(p(x_i;\theta_1):p(x_i;\theta_2))$.

\vskip 0.5cm
Codes for reproducible results are available at: \\
\centerline{\url{https://franknielsen.github.io/MCIG/}}


\begin{thebibliography}{10}

\bibitem{IG-2016}
S.~Amari.
\newblock {\em Information Geometry and Its Applications}.
\newblock Applied Mathematical Sciences. Springer Japan, 2016.

\bibitem{IGDiv-2010}
Shun-ichi Amari and Andrzej Cichocki.
\newblock Information geometry of divergence functions.
\newblock {\em Bulletin of the Polish Academy of Sciences: Technical Sciences},
  58(1):183--195, 2010.

\bibitem{Bregman-2005}
Arindam Banerjee, Srujana Merugu, Inderjit~S Dhillon, and Joydeep Ghosh.
\newblock Clustering with {B}regman divergences.
\newblock {\em Journal of machine learning research}, 6(Oct):1705--1749, 2005.

\bibitem{Ising-2018}
Bhaswar~B Bhattacharya, Sumit Mukherjee, et~al.
\newblock Inference in {I}sing models.
\newblock {\em Bernoulli}, 24(1):493--525, 2018.

\bibitem{BVD-2010}
Jean-Daniel Boissonnat, Frank Nielsen, and Richard Nock.
\newblock Bregman {V}oronoi diagrams.
\newblock {\em Discrete \& Computational Geometry}, 44(2):281--307, 2010.

\bibitem{Boyd-2004}
Stephen Boyd and Lieven Vandenberghe.
\newblock {\em Convex optimization}.
\newblock Cambridge university press, 2004.

\bibitem{IG-2014}
Ovidiu Calin and Constantin Udriste.
\newblock {\em Geometric Modeling in Probability and Statistics}.
\newblock Mathematics and Statistics. Springer International Publishing, 2014.

\bibitem{Ising-2000}
Barry~A Cipra.
\newblock The {I}sing model is {NP}-complete.
\newblock {\em SIAM News}, 33(6):1--3, 2000.

\bibitem{Cobb-1983}
Loren Cobb, Peter Koppstein, and Neng~Hsin Chen.
\newblock Estimation and moment recursion relations for multimodal
  distributions of the exponential family.
\newblock {\em Journal of the American Statistical Association},
  78(381):124--130, 1983.

\bibitem{Cover-2012}
Thomas~M Cover and Joy~A Thomas.
\newblock {\em Elements of information theory}.
\newblock John Wiley \& Sons, 2012.

\bibitem{Critchley-CIG}
Frank Critchley and Paul Marriott.
\newblock Computational information geometry in statistics: theory and
  practice.
\newblock {\em Entropy}, 16(5):2454--2471, 2014.

\bibitem{Crouzeix-1977}
Jean-Pierre Crouzeix.
\newblock A relationship between the second derivatives of a convex function
  and of its conjugate.
\newblock {\em Mathematical Programming}, 13(1):364--365, 1977.

\bibitem{GaussianClustering-2007}
Jason~V Davis and Inderjit~S Dhillon.
\newblock Differential entropic clustering of multivariate gaussians.
\newblock In {\em Advances in Neural Information Processing Systems}, pages
  337--344, 2007.

\bibitem{Dawid-2007}
A~Philip Dawid.
\newblock The geometry of proper scoring rules.
\newblock {\em Annals of the Institute of Statistical Mathematics},
  59(1):77--93, 2007.

\bibitem{Eguchi-1992}
Shinto Eguchi.
\newblock Geometry of minimum contrast.
\newblock {\em Hiroshima Mathematical Journal}, 22(3):631--647, 1992.

\bibitem{tensor-2011}
Daniel~A Fleisch.
\newblock {\em A student's guide to vectors and tensors}.
\newblock Cambridge University Press, 2011.

\bibitem{lse-2017}
B.~{Gao} and L.~{Pavel}.
\newblock {On the Properties of the Softmax Function with Application in Game
  Theory and Reinforcement Learning}.
\newblock {\em ArXiv e-prints}, April 2017.

\bibitem{MRF-1986}
Stuart Geman and Christine Graffigne.
\newblock Markov random field image models and their applications to computer
  vision.
\newblock In {\em Proceedings of the international congress of mathematicians},
  volume~1, page~2, 1986.

\bibitem{Bregman1DClustering-2017}
Allan Gr{\o}nlund, Kasper~Green Larsen, Alexander Mathiasen, and Jesper~Sindahl
  Nielsen.
\newblock Fast exact $k$-means, $k$-medians and {B}regman divergence clustering
  in {1D}.
\newblock {\em CoRR}, abs/1701.07204, 2017.

\bibitem{Fast1DClustering-2017}
Allan Gr{\o}nlund, Kasper~Green Larsen, Alexander Mathiasen, and Jesper~Sindahl
  Nielsen.
\newblock Fast exact $k$-means, $k$-medians and {B}regman divergence clustering
  in 1d.
\newblock {\em arXiv preprint arXiv:1701.07204}, 2017.

\bibitem{KassVos-1997}
Robert~E. Kass and Paul~W. Vos.
\newblock {\em Geometrical Foundations of Asymptotic Inference}.
\newblock Wiley-Interscience, July 1997.
\newblock Fisher-Rao metric of location-scale family is hyperbolic (and can be
  diagonalized), pages 192--193.

\bibitem{Lauritzen-1987}
Steffen~L. Lauritzen.
\newblock Statistical manifolds.
\newblock {\em Differential Geometry in Statistical Inference}, page 164, 1987.

\bibitem{SymbolicMonotoneOp-2017}
Florian Lauster, D~Russell Luke, and Matthew~K Tam.
\newblock Symbolic computation with monotone operators.
\newblock {\em Set-Valued and Variational Analysis}, pages 1--16, 2017.

\bibitem{KLaveraging-2014}
Qiang Liu and Alexander~T Ihler.
\newblock Distributed estimation, information loss and exponential families.
\newblock In {\em Advances in Neural Information Processing Systems}, pages
  1098--1106, 2014.

\bibitem{matsuzoe-2017}
Hiroshi Matsuzoe, Antonio~M Scarfone, and Tatsuaki Wada.
\newblock A sequential structure of statistical manifolds on deformed
  exponential family.
\newblock In {\em International Conference on Geometric Science of
  Information}, pages 223--230. Springer, 2017.

\bibitem{nielsen-2010}
Frank Nielsen.
\newblock A family of statistical symmetric divergences based on {J}ensen's
  inequality.
\newblock {\em arXiv preprint arXiv:1009.4004}, 2010.

\bibitem{LegendreIG-2010}
Frank Nielsen.
\newblock Legendre transformation and information geometry, 2010.

\bibitem{HT-2013}
Frank Nielsen.
\newblock Hypothesis testing, information divergence and computational
  geometry.
\newblock In {\em Geometric Science of Information}, pages 241--248. Springer,
  2013.

\bibitem{Chernoff-2013}
Frank Nielsen.
\newblock An information-geometric characterization of {C}hernoff information.
\newblock {\em IEEE Signal Processing Letters}, 20(3):269--272, 2013.

\bibitem{hpc-2016}
Frank Nielsen.
\newblock {\em Introduction to {HPC} with {MPI} for Data Science}.
\newblock Undergraduate Topics in Computer Science. Springer, 2016.

\bibitem{BVD-2007}
Frank Nielsen, Jean-Daniel Boissonnat, and Richard Nock.
\newblock On {B}regman {V}oronoi diagrams.
\newblock In {\em Proceedings of the eighteenth annual ACM-SIAM symposium on
  Discrete algorithms}, pages 746--755. Society for Industrial and Applied
  Mathematics, 2007.

\bibitem{BR-2011}
Frank Nielsen and Sylvain Boltz.
\newblock The {B}urbea-{R}ao and {B}hattacharyya centroids.
\newblock {\em IEEE Transactions on Information Theory}, 57(8):5455--5466,
  2011.

\bibitem{flashcards-2009}
Frank Nielsen and Vincent Garcia.
\newblock Statistical exponential families: A digest with flash cards.
\newblock {\em arXiv preprint arXiv:0911.4863}, 2009.

\bibitem{GaussianClustering-2009}
Frank Nielsen, Vincent Garcia, and Richard Nock.
\newblock Simplifying {G}aussian mixture models via entropic quantization.
\newblock In {\em 17th European Conference on Signal Processing (EUSIPCO)},
  pages 2012--2016. IEEE, 2009.

\bibitem{MCIG-general-2018}
Frank Nielsen and Ga\"etan Hadjeres.
\newblock Monte {C}arlo information geometry: The generic case of statistical
  manifolds.
\newblock {\em preprint}, 2018.

\bibitem{SEB-2008}
Frank Nielsen and Richard Nock.
\newblock On the smallest enclosing information disk.
\newblock {\em Information Processing Letters}, 105(3):93--97, 2008.

\bibitem{SymBregman-2009}
Frank Nielsen and Richard Nock.
\newblock Sided and symmetrized {B}regman centroids.
\newblock {\em IEEE transactions on Information Theory}, 55(6):2882--2904,
  2009.

\bibitem{Bregman1DClustering-2014}
Frank Nielsen and Richard Nock.
\newblock Optimal interval clustering: Application to {B}regman clustering and
  statistical mixture learning.
\newblock {\em IEEE Signal Processing Letters}, 21(10):1289--1292, 2014.

\bibitem{PEF-2016}
Frank Nielsen and Richard Nock.
\newblock Patch matching with polynomial exponential families and projective
  divergences.
\newblock In {\em International Conference on Similarity Search and
  Applications}, pages 109--116. Springer, 2016.

\bibitem{wmixture-2017}
Frank Nielsen and Richard Nock.
\newblock On $w$-mixtures: Finite convex combinations of prescribed component
  distributions.
\newblock {\em CoRR}, abs/1708.00568, 2017.

\bibitem{geowmixtures-2018}
Frank Nielsen and Richard Nock.
\newblock On the geometric of mixtures of prescribed distributions.
\newblock In {\em {IEEE} International Conference on Acoustics, Speech and
  Signal Processing ({ICASSP})}, 2018.

\bibitem{BVT-2009}
Frank Nielsen, Paolo Piro, and Michel Barlaud.
\newblock Bregman vantage point trees for efficient nearest neighbor queries.
\newblock In {\em Multimedia and Expo, 2009. ICME 2009. IEEE International
  Conference on}, pages 878--881. IEEE, 2009.

\bibitem{BBT-2009}
Frank Nielsen, Paolo Piro, and Michel Barlaud.
\newblock Tailored {B}regman ball trees for effective nearest neighbors.
\newblock In {\em Proceedings of the 25th European Workshop on Computational
  Geometry (EuroCG)}, pages 29--32, 2009.

\bibitem{lse-2016}
Frank Nielsen and Ke~Sun.
\newblock Guaranteed bounds on information-theoretic measures of univariate
  mixtures using piecewise log-sum-exp inequalities.
\newblock {\em Entropy}, 18(12):442, 2016.

\bibitem{MixedBregmanClustering-2008}
Richard Nock, Panu Luosto, and Jyrki Kivinen.
\newblock Mixed {B}regman clustering with approximation guarantees.
\newblock In {\em Joint European Conference on Machine Learning and Knowledge
  Discovery in Databases}, pages 154--169. Springer, 2008.

\bibitem{Pelletier-2005}
Bruno Pelletier.
\newblock Informative barycentres in statistics.
\newblock {\em Annals of the Institute of Statistical Mathematics},
  57(4):767--780, 2005.

\bibitem{MC-2014}
Christian~P Robert.
\newblock {\em Monte {C}arlo methods}.
\newblock Wiley Online Library, 2004.

\bibitem{HIG-2007}
Hirohiko Shima.
\newblock {\em The geometry of {H}essian structures}.
\newblock World Scientific, 2007.

\bibitem{BMFamily-2013}
Yichuan Tang and Ruslan~R Salakhutdinov.
\newblock Learning stochastic feedforward neural networks.
\newblock In {\em Advances in Neural Information Processing Systems}, pages
  530--538, 2013.

\bibitem{Gervsgorin-2010}
Richard~S Varga.
\newblock {\em Ger{\v{s}}gorin and his circles}, volume~36.
\newblock Springer Science \& Business Media, 2010.

\bibitem{KLnotanalytic-2004}
Sumio Watanabe, Keisuke Yamazaki, and Miki Aoyagi.
\newblock Kullback information of normal mixture is not an analytic function.
\newblock {\em technical report of IEICE (in Japanese)}, (2004-0):41--46, 2004.

\bibitem{refduality-2015}
Jun Zhang.
\newblock Reference duality and representation duality in information geometry.
\newblock In {\em AIP Conference Proceedings}, volume 1641, pages 130--146.
  AIP, 2015.

\end{thebibliography}

\appendix

\section{$\lse_0^+(x)$ is a Bregman generator\label{sec:proof-lse0p}}
We give the proof of Theorem~\ref{thm:lse0pBF}:

\begin{proof}
Since $\lse_0^+(x_1,\ldots, x_d)=\log \left(1+ \sum_{i=1}^d \exp(x_i)\right)$ is twice continuously differentiable, it suffices to prove that $\nabla^2\lse_0^+(x)\succ 0$.
We have:
\begin{eqnarray}
\partial_i \lse_0^+(x) &=& \frac{e^{x_i}}{1+\sum_k e^{x_k}},\\
\partial_j\partial_i \lse_0^+(x) &\stackrel{j\not = i}{=}&  \frac{-e^{x_i} e^{x_j} }{(1+\sum_k e^{x_k})^2},\\
 \partial_i\partial_i \lse_0^+(x) &=&  \frac{e^{x_i}(1+\sum_k e^{x_k}) -e^{x_i} e^{x_j} }{(1+\sum_k e^{x_k})^2}.
\end{eqnarray}

It follows that the Hessian $(\partial_j\partial_i \lse_0^+(x))_{ij}$ is a diagonally dominant matrix since:
\begin{equation}
e^{x_i}\left(1+\sum_k e^{x_k}\right) = e^{x_i} + e^{x_i}\sum_k e^{x_k} > \sum_{j\not =i} \left|-e^{x_i} e^{x_j} \right| = e^{x_i}  \sum_{j\not =i} e^{x_j}.
\end{equation}

To conclude that the Hessian matrix is SPD, we use Gershgorin circle theorem~\cite{Gervsgorin-2010} to bound the spectrum of a square matrix: 
The eigenvalues of the Hessian matrix are thus real and fall inside a disk of center $(e^{x_i}(1+\sum_k e^{x_k}))_i$ and radius $e^{x_i}  \sum_{j\not =i} e^{x_j}$. Therefore all eigenvalues are positive, and the Hessian matrix is positive definite.

\end{proof}

For $x=(x_1,\ldots, x_d)\in\bbR^d$, we have:
\begin{equation}
 \nabla\lse(x) = \sigma(x),
\end{equation}
where $\sigma(x)$ is the {\em softmax} function:
\begin{equation}
\sigma(x) \eqdef \left( \frac{e^{x_i}}{\sum_{k=1}^d e^{x_k}} \right)_{i\in\{1,\ldots, d\}}.
\end{equation}

Thus by analogy, we may define for $x\in\bbR^d$:
\begin{equation}
\sigma_0^+(x) \eqdef \left( \frac{e^{x_i}}{1+\sum_k e^{x_k}} \right)_{i\in\{1,\ldots, d\}},
\end{equation}
so that $\nabla \lse_0^+(x) =\sigma_0^+(x)$.

\end{document}